  \providecommand\BibTeX{{%
    \normalfont B\kern-0.5em{\scshape i\kern-0.25em b}\kern-0.8em\TeX}}}
\definecolor{aluminum}{RGB}{153,153,153}
\definecolor{platinum}{RGB}{228,228,228}
\definecolor{bgc}{RGB}{245,245,245}
\definecolor{gallery}{RGB}{240,240,240}
\definecolor{tuatara}{RGB}{67, 67, 67}
\definecolor{flamingo}{RGB}{237, 88, 85}
\definecolor{salmon}{RGB}{242,131,107}
\definecolor{free_speech_aquamarine}{RGB}{0, 156, 114}
\definecolor{Aquamarine3}{RGB}{102, 205, 170} 
\definecolor{Goldenrod1}{RGB}{255, 193, 37} 
\definecolor{IndianRed1}{RGB}{255, 106, 106} 
\definecolor{SlateBlue1}{RGB}{131, 111, 255}
\definecolor{bb}{HTML}{95e1d3}
\definecolor{gg}{HTML}{c7ffd8}
\definecolor{yy}{HTML}{f0c38e}
\definecolor{blu}{HTML}{5ab4ba}
\definecolor{rr}{HTML}{f38181}
\definecolor{c1}{HTML}{6E85B2}
\definecolor{c2}{HTML}{368B85}
\definecolor{c3}{HTML}{C56824}
\definecolor{c4}{HTML}{FFC069}
\definecolor{c5}{HTML}{916BBF}
\begin{document}

\title{Meta-Weight Graph Neural Network: Push the Limits Beyond Global Homophily}

\author{Xiaojun Ma}
\email{mxj@pku.edu.cn}
\affiliation{
\institution{
Key Lab. of Machine Perception (MoE), School of AI, Peking University
}
\city{Beijing}
\country{China}
}

\author{Qin Chen}
\email{
     chenqink@pku.edu.cn
}
\affiliation{
\institution{ 
Key Lab. of Machine Perception (MoE), School of AI,  Peking University
}
\city{Beijing}
\country{China}
}

\author{Yuanyi Ren}
\email{
    celina@pku.edu.cn
}
\affiliation{
\institution{
Key Lab. of Machine Perception (MoE), School of AI, Peking University
}
\city{Beijing}
\country{China}
}

\author{Guojie Song}
\authornote{Corresponding author.}
\email{
    gjsong@pku.edu.cn
}
\affiliation{
\institution{
Key Lab. of Machine Perception (MoE), School of AI, Peking University
}
\city{Beijing}
\country{China}
}

\author{Liang Wang}
\email{
    liangbo.wl@alibaba-inc.com
}
\affiliation{
\institution{Alibaba Inc}
\city{Beijing}
\country{China}
}







\begin{abstract}
Graph Neural Networks (GNNs) show strong expressive power on graph data mining, by aggregating information from neighbors and using the integrated representation in the downstream tasks. The same aggregation methods and parameters for each node in a graph are used to enable the GNNs to utilize the homophily relational data. However, not all graphs are homophilic, even in the same graph, the distributions may vary significantly. Using the same convolution over all nodes may lead to the ignorance of various graph patterns. Furthermore, many existing GNNs integrate node features and structure identically, which ignores the distributions of nodes and further limits the expressive power of GNNs.  To solve these problems, we propose Meta Weight Graph Neural Network (MWGNN) to adaptively construct graph convolution layers for different nodes. First, we model the Node Local Distribution (NLD) from node feature, topological structure and positional identity aspects with the Meta-Weight. Then, based on the Meta-Weight, we generate the adaptive graph convolutions to perform a node-specific weighted aggregation and boost the node representations. Finally, we design extensive experiments on real-world and synthetic benchmarks to evaluate the effectiveness of MWGNN. These experiments show the excellent expressive power of MWGNN in dealing with graph data with various distributions. 
\end{abstract}

\begin{CCSXML}
<ccs2012>
   <concept>
       <concept_id>10002950.10003624.10003633.10010917</concept_id>
       <concept_desc>Mathematics of computing~Graph algorithms</concept_desc>
       <concept_significance>500</concept_significance>
       </concept>
   <concept>
       <concept_id>10010147.10010257.10010293.10010294</concept_id>
       <concept_desc>Computing methodologies~Neural networks</concept_desc>
       <concept_significance>300</concept_significance>
       </concept>
   <concept>
       <concept_id>10010147.10010257.10010258.10010259</concept_id>
       <concept_desc>Computing methodologies~Supervised learning</concept_desc>
       <concept_significance>100</concept_significance>
       </concept>
 </ccs2012>
\end{CCSXML}

\ccsdesc[500]{Mathematics of computing~Graph algorithms}
\ccsdesc[300]{Computing methodologies~Neural networks}
\ccsdesc[100]{Computing methodologies~Supervised learning}


\keywords{Graph Neural Networks, Representation Power, Graph Theory}

\maketitle

\section{Introduction}
\label{sec:introduction}

As a powerful approach to extracting and learning information from relational data, Graph Neural Networks (GNNs) have flourished in many applications, including molecules \cite{gcpn, LanczosNet}, social networks \cite{GovernCascade}, biological interactions \cite{daggnn}, and more. 
Among various techniques \cite{GNNScarselli09, bruna-gcn, defferrard2016convolutional, kipf-gcn}, Graph Convolutional Network (GCN) stands out as a powerful and efficient model. Since then, more variants of GNNs, such as GAT \cite{gat}, SGCN \cite{sgc}, GraphSAGE \cite{sage}, have been proposed to learn more powerful representations. These methods embrace the assumption of homophily on graphs, which assumes that connected nodes tend to have the same labels.
Under such an assumption, the propagation and aggregation of information within graph neighborhoods are efficient in graph data mining. 

Recently, some research papers \cite{H2GCN,AMGCN,GPRGNN,CPGNN,Geom-GCN} propose to adapt the Graph Convolution to extend GNNs' expressive power beyond the limit of homophily assumption, because there do exist real-world graphs with heterophily settings, where linked nodes are more likely to have different labels. 
These methods improve the expressive power of GNNs by changing the definition of neighborhood in graph convolutions. For example, H2GCN \cite{H2GCN} extend the neighborhood to higher-order neighbors and AM-GCN \cite{AMGCN} constructs another neighborhood considering feature similarity. Changing the definition of neighborhood do help GNNs to capture the heterophily in graphs. 

\begin{figure} \centering
  \begin{subfigure}[b]{0.48\linewidth}
      \includegraphics[width=.99\linewidth]{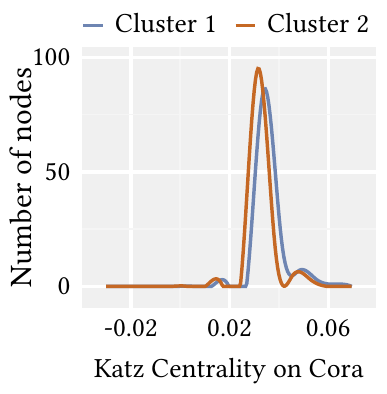}
  \end{subfigure} %
  \begin{subfigure}[b]{0.48\linewidth}    
      \includegraphics[width=.99\linewidth]{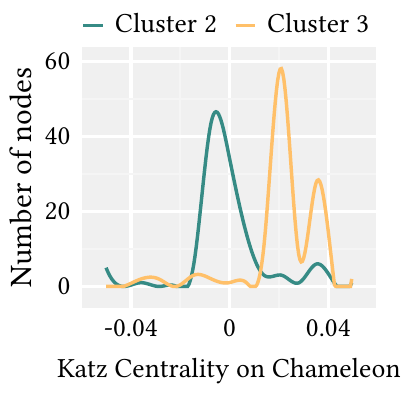}  
  \end{subfigure} 
  \caption{Katz centrality distribution of clustered Cora (in class 1) and Chameleon (in class 4). The nodes are clustered and then the Katz Centrality distribution is plotted for nodes of the same label but belonging in two different clusters. }
  \label{fig:katz}
\end{figure}


However, homophily and heterophily is only a simple measurement of the NLD, because they only consider the labels of the graphs. In real-world graphs, topological structure, node feature and positional identity play more critical role than labels, especially for graphs with few labels. So the models mentioned above lack the ability to model NLD.
We visualize the Katz centrality \cite{katz1953new}, a measurement of the topological structure of nodes, in \autoref{fig:katz} to show two situations. We first separate the graph with METIS \cite{karypis1998fast} and show nodes of one label in Cora and Chameleon. In Cora, the Katz centrality reveals high consistency in different parts, while that of Chameleon shows an obvious distinction between different parts.

In addition to the difference in topological structure distributions, there are also variances in node feature distributions. Besides, the correlation between topological structure and node feature distributions is not consistent. 
Therefore, using one graph convolution to integrate the topological structure and node feature information leads to ignorance of such complex correlation. 

In conclusion, there are two challenges limiting the expressive power of GNNs: (1) the complexity of Node Local Distributions (2) the inconsistency correlation between node feature and topological structure. To overcome these challenges, we propose \textit{Meta-Weight Graph Neural Network}. 
For the first challenge, We model the NLD in topological structure, node feature, positional identity fields with Meta-Weight. In detail, the three types of NLD is captured using Gated Recurrent Unit (GRU) and Multi-Layer Perception (MLP) separately and then combined using an attention mechanism.
For the second challenge, based on Meta-Weight, we adaptively conduct graph convolution with two aggregation weights and three channels. The proposed two aggregation weights decouple the correlation between node feature and topological structure. To further model the complex correlation, we boost the node representations with the entangled channel, node feature channel and topological structure channel, respectively. 
We conduct experiments on semi-supervised node classification. The excellent performance of MWGNN is demonstrated empirically on both real-world and synthetic datasets. 
Our major contributions are summarized as follows:
\begin{itemize}

\item 
We demonstrate the insufficient modeling of NLD of existing GNNs and propose a novel architecture MWGNN which successfully adapt the convolution learning for different distributions considering topological structure, node feature, and positional identity over one Graph. 


\item 
We propose the Meta-Weight mechanism to describe the complicated NLD and 
the adaptive convolution based on Meta-Weight to boost node embeddings with decoupled aggregation weights and independent convolution channels. 

\item 
We conduct experiments on real-world and synthetic datasets to demonstrate the superior performance of MWGNN.
Especially, MWGNN gains an accuracy improvement of over 20\% on graphs with the complex NLD. 
\end{itemize}

\section{Preliminary}

Let $\mathcal G = (\mathcal V, \mathcal E)$ be an undirected, unweighted graph with node set $\mathcal V$ and edge set $\mathcal E$. 
Let $ | \mathcal V | = N $. 
We use $\boldsymbol{A} \in \{0,1 \}^{ N \times N }$ for the adjacency matrix,  $\boldsymbol{X} \in \mathbb{R}^{ N \times F }$ for the node feature matrix, 
and $\boldsymbol y \in \mathbb{R}^{N}$ for the node label matrix.
Let $\mathcal N_i$ denote the neighborhood surrounding node $v_i$, and $\mathcal{N}_{i,k} = \{ v_j | d(v_i,v_j) \leq k \}$ denote $v_i$'s neighbors within $k$ hops.


\subsection{Graph Neural Networks}
\label{sec:gnn}

Most Graph Neural Networks formulates their propagation mechanisms by two phases, the aggregation phase and the transformation phase.
The propagation procedure can be summarized as
\begin{equation}
    \boldsymbol{H}_{i}^{(l+1)} = \operatorname{TRANS}  \left( \operatorname{AGG} \left( \boldsymbol{H}_{i}^{(l)} , \left\{\boldsymbol{H}_{j}^{(l)}: v_j \in N_i \right\} \right) \right),
\end{equation}
where $\boldsymbol H^{(l)} \in \mathbb{R}^{N \times d^{(l)}}$ stands for the embedding of the $k$-th layer and $\boldsymbol H^{(0)}= \boldsymbol X$, $d^{(l)}$ is the dimension of $l$-th layer representations.  
$\operatorname{AGG}(\cdot)$ denotes the function aggregating $\boldsymbol H^{(k)}$, and  $\operatorname{TRANS}(\cdot)$ is a layer-wise transformation function including a weight matrix $\boldsymbol W^{(l)}$ and the non-linear activation fuctions (e.g. ReLU).


\subsection{Global and Local Homophily}

Here we define global and local homophily ratio to estimate the homogeneity level of a graph.
\begin{definition}[Global Edge Homophily]

We define Global Edge Homophily ratio \cite{H2GCN}  $h$ as a measure of the graph homophily level:
\begin{equation}
    h = \frac{| \{(v_i, v_j) : (v_i, v_j) \in \mathcal E \wedge \boldsymbol y_i = \boldsymbol y_j \}|} {|\mathcal E|},
\end{equation}
$h$ represents the percentage of edges  connecting nodes of the same  label in the edge set $\mathcal E$,
Graphs with strong homophily may have a high global edge homophily ratio up to 1, while those with low homophily embrace a low global edge homophily ratio down to 0.
\end{definition} 

\begin{definition}[Local Edge Homophily]
For node $v_i$ in a graph, we define the Local Edge Homophily ratio $h_i$ as a measure of the local homophily level surrounding node $v_i$:


\begin{equation}
    h_i = \frac{|\{(v_i, v_j) : v_j \in \mathcal N_i \wedge \boldsymbol y_i = \boldsymbol y_j)\}|} {|\mathcal N_i|},
    \label{eq:local edge homophily}
\end{equation}
$h_i$ directly represents the edge homophily in the neighborhood $\mathcal N_i$ surrounding node $v_i$.  
\end{definition}

\section{Meta-Weight Graph Neural Network}
\label{sec:model}

\begin{figure*}[htb]
    \centering
    \includegraphics[width=0.98\textwidth]{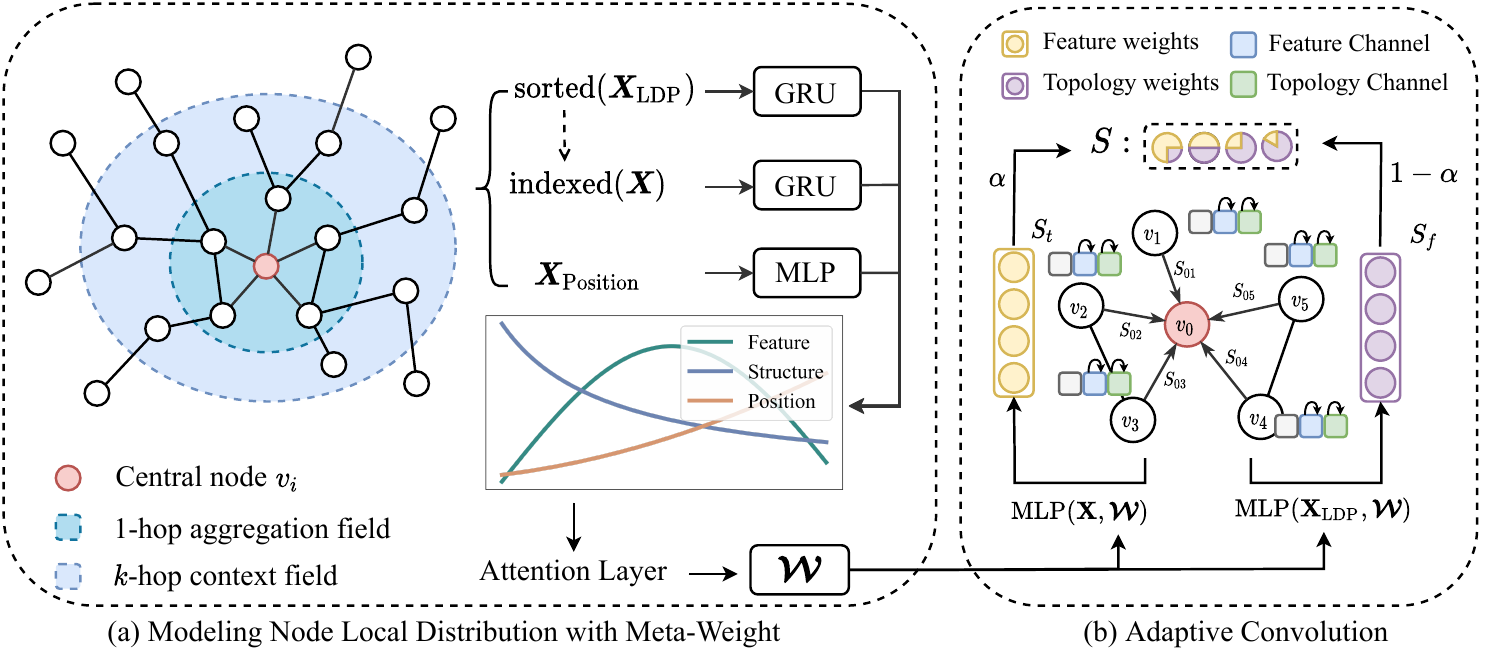} 
    \caption{The framework of MWGNN. (a) Generate the Meta-Weight considering $k$-hop context field for central nodes. First we learn three local distributions in topological structure, node feature, and positional identity fields and integrate them with an attention layer. (b) Based on the Meta-Weight, we propose the Adaptive Convolution. By generating $\boldsymbol{S}_t, \boldsymbol{S}_f$ and adaptively fusing them with a hyper-parameter $\alpha$, the Adaptive Convolution aggregates the neighbors. Then two additional Independent Convolution Channels are proposed to boost the node representations efficiently. }
    \label{fig:framework}
\end{figure*}

\paragraph{Overview}
In this section, we introduce the proposed method MWGNN. The MWGNN framework consists of two stages: (a) modeling Node Local Distribution by Meta-Weight and (b) adaptive convolution. The visualization of the framework is shown in \autoref{fig:framework}. 
First, we generate the Meta-Weight to model the NLD considering topological structure, node feature, and positional identity distributions separately. Then we integrate them via an attention mechanism, as shown in (a).
Next, the key contributions in (b) is the Adaptive Convolution consisting of the Decoupled Aggregation Weights and Independent Convolution Channels for node feature and topological structure.



\subsection{Modeling Node Local Distribution with Meta-Weight}
\label{sec:model-a}
In this stage, we aim to learn a specific \textit{key} to guide the graph convolution adaptively.
As discussed in \autoref{sec:introduction}, the complex Node Local Distribution hinders the expressive power of GNNs. For the node classification task, GNNs essentially map the combination of node features and topological structure from neighborhood to node labels. Using the same convolution over all the nodes and the pre-defined neighborhood, most existing GNNs can only project a fixed combination to node labels. Therefore, these GNNs achieve satisfactory results on graphs with simple NLD (e.g. homophily) while failing to generalize to graphs with complex NLD (e.g. heterophily). 

To push the limit on graphs with complex NLD and conduct adaptive convolution on nodes, first we need to answer the question: \textit{ What exactly NLD is and how to explicitly model it?} 
Node Local Distribution is a complex and general concept, in this paper, when discussing NLD, we refer to the node patterns in topological structure, node feature, and positional identity fields. 
Topological structure and node feature are widely used in the learning of node representations. 

However, only using topological structure and node feature limits the expressive power of GNNs, because some nodes can not be recognized in the computational graphs \cite{idgnn}.  
IDGNN proposes a solution to improve the expressive power of GNNs than the 1-Weisfeiler-Lehman (1-WL) test \cite{1wltest}, by inductively injecting nodes’ identities during message passing.
It empowers GNNs with the ability to count cycles, differentiate random $d$-regular graphs, etc. 
Nevertheless, the task of modeling NLD is much more complex. Thus we need a stronger identification.  
So we introduce the positional identity to model the NLD along with topological structure and node feature. 
In general, we learn three representation matrices $\boldsymbol{\mathcal D}_t, \boldsymbol{\mathcal D}_f, \boldsymbol{\mathcal D}_p \in \mathbb{R}^{N \times d_{\text{Meta}}}$ to formulate these three node patterns.

Finally, the three kinds of node patterns may contribute unequally to NLD for different nodes, so we adopt an attention mechanism to combine them automatically. 

\subsubsection{Node Patterns in Topological Structure, Node Feature, and Positional Identity Fields}
We elaborate on modeling the local node patterns in topological structure, node feature, and positional identity fields. 

\paragraph{Topological Structure Field}
Topology information is an essential part of graph data. Most GNNs implicitly capture topology information by aggregating the representations of neighbor nodes, and combining feature information with topology information. Other methods, Struct2Vec \cite{struc2vec} for example, explicitly learn the topology information and express it with vectors. Usually, they utilize computationally expensive mechanisms like Random Walk to explicitly extract the topology information. In this paper, we propose a simple but effective method using the degree distribution to capture node patterns in the topological structure field because the degree distribution can describe the topological structure \cite{Statistical_mechanics_of_complex_networks}.

For a node $v_i$, the sub-graph induced by $\mathcal{N}_{i,k}$ contains the related topology information. 
The degree distribution of $\mathcal{N}_{i,k}$ describes the local topological structure. To make this distribution graph-invariant, we sort the degrees by their value. 
It is noted that we use a statistical description, the Local Degree Profile (LDP) \cite{cai2018simple} to provide multi-angle information: 
\begin{equation*}
\begin{aligned}
    \boldsymbol{X}_{\text{LDP},i} = \big[d_i, &\operatorname{MIN} ( \operatorname{DN}_i ), \operatorname{MAX} (\operatorname{DN}_i), \\
    &\operatorname{MEAN} (\operatorname{DN}_i), \operatorname{STD}(\operatorname{DN}_i) \big],
\end{aligned}
\end{equation*}

where $\boldsymbol{X}_{\text{LDP},i}$ is the LDP for node $v_i$, $d_i$ is the degree of $v_i$ and $ \operatorname{DN}_i = \{d_j | v_j \in \mathcal N_i \}$. 

As a sequential data, the sorted LDP can be well studied by Gated Recurrent Unit \cite{gru} as:
\begin{equation}
\small
\begin{aligned} 
    \boldsymbol u^{(t)} &= \sigma \left( \boldsymbol W_u \left[ \boldsymbol h^{(t-1)}, \boldsymbol x^{(t)} \right] + \boldsymbol{b}_u \right) \\ 
    \boldsymbol r^{(t)} &= \sigma \left( \boldsymbol{W}_{r} \left[ \boldsymbol{h} ^{(t-1)}, \boldsymbol{x}^{(t)}\right] + \boldsymbol{b}_r  \right) \\
    \hat{\boldsymbol{h}}^{(t)} &= \tanh \left( \boldsymbol{W}_h \left[ \boldsymbol{r}^{(t)} \odot   \boldsymbol{h} ^{(t-1)}, \boldsymbol{x}^{(t)} \right] + \boldsymbol{b}_h  \right) \\
    \boldsymbol{h}^{(t)}&=\left(1-\boldsymbol{u}^{(t)}\right) \odot \boldsymbol{h}^{(t-1)}+\boldsymbol{u}^{(t)} \odot \hat{\boldsymbol{h}}^{(t)} \\
\end{aligned}
\label{eq:GRU}
\end{equation}
where $\boldsymbol{x}^{(t)}\in \mathbb{R}^{5}$ is the $t$-th LDP vector of the sorted degree sequence of $\mathcal{N}_{i,k}$, $\odot$ denotes element-wise multiplication, and $\boldsymbol{h}^{(t)}$ denotes the hidden states of $t$-th step of GRU. 
The output $\boldsymbol{\mathcal D}_t =  \boldsymbol{h}^{(T)}$ of GRU is the learnt topological structure distribution, where $T$ is the layer number of GRU.

\paragraph{Node Feature Field} 
Usually, patterns in node feature field are captured by directly applying neural networks on the input $\boldsymbol{X}$.
On the one hand, this explicit estimation of node feature patterns enables any off-the-shelf neural networks.
On the other hand, thanks to the flourish of neural networks, we can accommodate different distributions of node features. 

In learning the distributions of topological structure, we sort the degree sequence by the value of the degrees, which enables the estimator to perform the graph-agnostic calculation.
To maintain the node-level correspondence between feature and degree sequence, we sort the node feature $\boldsymbol{X}$ along the nodes' dimension to keep them in the same order.
For node $v_i$, the sorted node feature sequence is:
\begin{equation*} 
\small
    \boldsymbol{X}_{\text{Feature}} = \operatorname{SORT} \left(
     \{ \boldsymbol{X}_j | v_j \in \mathcal{N}_{i,k} \}
    \right),
\end{equation*}
where the sorting function $\operatorname{SORT} (\cdot)$ denotes the order by nodes' degrees. 
We consider two types of neural work to model Feature Pattern:
\begin{itemize}
\item GRU. 
The mechanism to learn the distribution of node feature pattern resembles the one of topological structure pattern.
Taking $\boldsymbol{X}_{\text{Feature}}$ as input, another GRU as \autoref{eq:GRU} is applied.
In this way, GRU reckons the distribution of node feature pattern, and the output $\boldsymbol{\mathcal D}_f$ of GRU is the learnt node feature distribution. 

\item Average.
The average operation can be viewed as the simplest neural networks without parameters.
This is a graph-agnostic method without the need to sort the node feature. 
Simply taking the average of $\{ \boldsymbol{X}_j | v_j \in \mathcal{N}_{i,k} \}$, we have the summary of the feature distribution. 
In the experiment, the most results are carried by MWGNN with the average operation, 
which illustrates that the specific implementation is not essential to the ability of meta-weight $\mathcal W$.
On the contrary, the general design of the local distribution generator exerts a more significant impact on the results.
\end{itemize}

\paragraph{Positional Identity Field} 
The position embedding is widely used in Transformer to enhance models' expressive power.
In fact, position embedding has been used in graphs in another form \cite{pgnn}, by sampling sets of anchor nodes and computing the distance of a given target node to each anchor-set. 
However, explicitly learning a non-linear distance-weighted aggregation scheme over the anchor-sets \cite{pgnn} is computationally intensive and requires much space for the storage of anchor nodes. 
Therefore, we use the distance of the shortest path (SPD) between any two nodes as the position embedding \cite{graphormer}, which helps the model accurately capture the spatial dependency in a graph. 
To be concrete, the position embedding of node $v_i$ is:
\begin{equation}
\small
    \boldsymbol{X}_{\text{Position},i} = \left( \phi (v_i,v_1), \phi (v_i,v_2), \cdots, \phi (v_i,v_n) \right),
\end{equation} 
here $\phi (v_i,v_j)$ denotes the SPD between nodes $v_i$ and $v_j$ if the two nodes are connected. If not, we set the output of $\phi$ to be a special value, i.e., -1. 
Then, a neural network $\Phi (\cdot)$ is applied on the position embedding $\boldsymbol{X}_{\text{Position}}$ to model the distributions of positional identity pattern. Finally, we have $\boldsymbol{\mathcal D}_p = \Phi \left( \boldsymbol{X}_{\text{Position}}  \right)$  as the positional identity distribution.

\subsubsection{ Integration of Three Distributions }
\label{sec:att}

The above-mentioned process models three specific local distributions: topological structure distribution, $\boldsymbol{\mathcal D}_t$,  node feature distribution $\boldsymbol{\mathcal D}_f$, and positional identity distribution $\boldsymbol{\mathcal D}_p$.
The overall local distribution of nodes could be correlated with one of them or their combinations to different extents. Thus we use an attention mechanism to learn the corresponding combination weights $\left(\boldsymbol a_t, \boldsymbol a_f, \boldsymbol a_p \right) \in \mathbb{R}^{N \times 3}$ as the attention values of $N$ nodes with the distributions $\boldsymbol{\mathcal D}_t, \boldsymbol{\mathcal D}_f, \boldsymbol{\mathcal D}_p$, respectively.
For node $v_i$, the corresponding topology structure
distribution is the $i$-th row of $\boldsymbol{\mathcal D}_t$.
We apply a nonlinear transformation to $\boldsymbol{\mathcal D}_t$, and then use one shared attention vector $\boldsymbol{q} \in \mathbb{R}^{1 \times d_q }$ to compute the attention value for node $v_i$ as the following:
\begin{equation}
\small
    \omega_t^i = \boldsymbol{q} \cdot \operatorname{tanh} \left( 
        \boldsymbol{W}_{a} \cdot ( \boldsymbol{\mathcal D}_{t,i} )^T + \boldsymbol{b}
    \right),
\end{equation}
where $\operatorname{tanh} (\cdot)$ is the activation function, $\boldsymbol{W}_a \in \mathbb{R}^{d_q \times d_{ \text{Meta}} }$ is the parameter matrix and $\boldsymbol b \in \mathbb{R}^{d_q \times 1}$ is the bias vector. Similarly, the attention values for node feature and positional identity distributions $\boldsymbol{\mathcal D}_f, \boldsymbol{\mathcal D}_p$ are obtained by the same procedure. 
We then normalize the attention values with the softmax function to get the final weight 
$a_{t}^{i} = \frac{ \exp \left( \omega_{t}^{i} \right) } { \exp \left( \omega_{t}^{i} \right) + \exp \left( \omega_{f}^{i} \right) + \exp \left(  \omega_{p}^{i} \right) }$. 
Large $a_{t}^{i}$ implies the topological structure distribution dominates the NLD. 
Similarly, we compute $a_{f}^{i}$ and $a_{p}^{i}$. 
Now for all the $N$ nodes, we have the learned weights $\boldsymbol{a}_{t}=\left[a_{t}^{i}\right],
\boldsymbol{a}_{f}=\left[a_{f}^{i}\right],
\boldsymbol{a}_{p}=$ $\left[a_{p}^{i}\right] \in \mathbb{R}^{N \times 1}$, 
and denote $\boldsymbol{a}_{t}=\operatorname{diag}\left(\boldsymbol{a}_{t}\right), \boldsymbol{a}_{f}=\operatorname{diag}\left(\boldsymbol{a}_{\boldsymbol{f}}\right)$ and $\boldsymbol{a}_{p}=\operatorname{diag}\left(\boldsymbol{a}_{p}\right)$. 
Then we integrate these three distributions to obtain the representation of NLD $\boldsymbol{\mathcal W}$ :
\begin{equation}
\small
\boldsymbol{\mathcal W}=
\boldsymbol{a}_{f} \odot \boldsymbol{\mathcal D}_{f}+\boldsymbol{a}_{t} \odot \boldsymbol{\mathcal D}_{t}+\boldsymbol{a}_{p} \odot \boldsymbol{\mathcal D}_{p}.
\end{equation}

\subsection{Adaptive Convolution}
\label{sec:model-b}
In the second stage, we elaborate the concrete algorithm based on the generated meta-weight $\boldsymbol{ \mathcal W }$. 
To adapt the graph convolution according to the information contained within the NLD, we propose the Decoupled Aggregation Weights and Independent Convolution Channels for node feature and topological structure.
On the one hand, we decouple the neighbor embedding aggregation weights based on $\boldsymbol{ \mathcal W }$ into $\boldsymbol{S}_f$ and $\boldsymbol{S}_t$ and balance them with a hyper-parameter $\alpha$.
The design ensures that the aggregation  take the most correlated information into account.
On the other hand, two additional Independent Convolution Channels for original topology and feature are introduced to boost the node representations.


\subsubsection{Decouple Topology and Feature in Aggregation}
\label{suction:model-b}
Recalling the discussion at the beginning of \autoref{sec:model-a}, using the same convolution over all the nodes and the pre-defined neighborhood, most existing GNNs can only project a fixed combination of node features and topological structure from neighborhood to node labels. Therefore, these GNNs achieve satisfactory results on graphs with simple NLD.
However, when the local distribution varies, the common $\operatorname{MEAN}(\cdot)$ or normalized aggregation can not recognize the difference and loss the distinguishment among nodes. 
Therefore,  we propose decoupling topology and feature in aggregation to adaptively weigh the correlation between neighbor nodes and ego nodes from the local distribution concept.
The following is the details of our decouple mechanism:
\begin{gather}
\small
\boldsymbol{H}^{(l+1)} = \sigma \left( \hat{ \boldsymbol{P}} \boldsymbol{H}^{(l)}  \boldsymbol{W}^{(l)} \right), \hat{\boldsymbol{P}} = \boldsymbol A \odot \boldsymbol S  \\
\boldsymbol{S} = \alpha \cdot \boldsymbol{S}_f + (1-\alpha) \cdot \boldsymbol{S}_t  \\
\boldsymbol{S}_f = \Psi_f \left( \boldsymbol{ \mathcal W},  \boldsymbol{X} \right) ,
\boldsymbol{S}_t = \Psi_t \left( \boldsymbol{ \mathcal W},  \boldsymbol{X}_{\text{LDP} } \right),
\label{eq:decouple}
\end{gather}
where $\boldsymbol{H}^{(l)} \in \mathbb{R}^{N\times d_l}$ denotes the hidden state of the $l$-th layer,  $\boldsymbol{W}^{(l)} \in \mathbb{R}^{d_l \times d_{l+1}}$ denotes the parameters of the $l$-th layer, $\sigma$ is the activation function, $\boldsymbol{S}$ is the integrated weight of aggregation, and $\boldsymbol{S}_f, \boldsymbol{S}_t$ are decoupled weights generated by two MLPs $\Psi_f, \Psi_t $, respectively.  
$\alpha$ is a hyper-parameter to balance the $\Psi_f$ and $\Psi_t$. 
Equipped with the external $\boldsymbol{X}$ and $\boldsymbol{X}_{\text{LDP}}$, 
the decouple of topology and feature in aggregation empowers the graph convolution to distinguish the different dependence on the corresponding factors and adjust itself to achieve the best performance.

\subsubsection{Independent Convolution Channels for Topology and Feature} 
GNNs learn node representations via alternately performing the aggregation and the transformation. 
In addition to the integrated information, the original node patterns in the feature and topological structure are essential in graph data mining. 
GNNs lose their advantages when the representations are over-smoothed \cite{deeper_insight} because the useful original node patterns are smoothed.
Recently, to alleviate the problem, some research work \cite{appnp,dagnn,sgc,GPRGNN} proposes that separating the procedure of aggregation and transformation. 
APPNP \cite{appnp} first generates predictions for each node based on its own features and then propagates them via a personalized PageRank scheme to generate the final predictions. 
GPR-GNN \cite{GPRGNN} first extracts hidden states for each node and then uses Generalized PageRank to propagate them.
However, the topological information is still entangled with the features even after separating the projection and propagation. Therefore, we propose two additional Independent Convolution Channels for the topology and feature information so that the model can maintain the original signals.
The detailed computation is:
\begin{equation}
\small
    \begin{aligned}
    \boldsymbol{H}^{(l+1)} = &\, \sigma  \Big( \left( 1-\lambda_1 - \lambda_2 \right) \hat{\boldsymbol{P}} \boldsymbol{H}^{(l)} + \lambda_1 \boldsymbol{H}_f^{(0)} + \\ & \lambda_2 \boldsymbol{H}_t^{(0)}   \Big) \cdot 
    \left( (1-\beta)\cdot \boldsymbol{I}_n + \beta \cdot \boldsymbol{W}^{(l)} \right),  \\ 
    \end{aligned}
    \label{eq:indenpent-channels}
\end{equation}
where 
$\boldsymbol{I}_n \in \mathbb{R}^{d_l \times d_{l+1}}$ with with ones as diagonal elements and  $\lambda_1, \lambda_2, \beta$ are hyper-parameters. $\boldsymbol{H}_f^{(0)}$ is the representation of initial node features and $\boldsymbol{H}_t^{(0)}$ is the representation of initial node topology embedding. 

Usually, in the previous GNNs like \cite{resgcn}, the $\boldsymbol{H}_f^{(0)}$ is the original node feature matrix $\boldsymbol X$. 
In our implementation, we apply fully-connected neural networks on the original node feature matrix $\boldsymbol X$ and the adjacency matrix $\boldsymbol{A}$ respectively to obtain the lower-dimensional initial representations for $\boldsymbol{H}_f^{(0)}$ and $\boldsymbol{H}_t^{(0)}$, so that when the feature dimension $F$ and the $N$ are large, we can guarantee an efficient calculation.


\subsection{Complexity} 
Here we analyse the time complexity of training MWGNN.
Let $L_d = \max \left( |\mathcal N _{1,k} |, \cdots, |\mathcal N _{N,k} |  \right)$ denote the length of node degree sequence length for structure pattern.
Because $d_\text{LDP}=5$, we leave out this term. 
In \autoref{sec:model-a}, the GRU costs $O\left(L_d(d_\text{Meta}^2 + d_\text{Meta} ) \right)$.
The MLP generating position embeddings costs  $O(N^2 d_\text{Meta})$ and we can reduce it to $O(|\mathcal E| d_\text{Meta} )$ with an efficient GPU-based implementation using sparse-dense matrix multiplications. 
Next, the integration of three distributions costs $ O\left( N d_\text{Meta} d_q^2  \right)$.
In the implementation, we set all the dimension of all hidden states as $d_{\text{hidden}}$.
The computation of \autoref{eq:decouple} costs $ O\left( N (F + d_\text{Meta} \right)$ as $F > \text{LDP}$.
The computation for $\boldsymbol{H}_f^{(0)}$ and $\boldsymbol{H}_t^{(0)}$ costs $O  \left( d_\text{hidden} (F N+ |\mathcal{E}| \right) $.  
The overall time  complexity of MWGNN is 
\begin{equation*}
\small
    O\left(L_d d_\text{hidden}^2 + N d_\text{hidden}^3+ NF d_\text{hidden} + |\mathcal E| (F+ d_\text{hidden} ) \right),
\end{equation*}
which matches the time complexity of other GNNs.

\section{Deep Analysis of MWGNN}
\label{sec:analysis}

\subsection{How Node Local Distribution Influence the Expressive Power of GNNs}

The discussion and empirical results above illustrate the importance of modeling NLD. 
Recalling \autoref{sec:model-a}, local edge homophily can be a relatively plain measurement for NLD. 
Therefore, without loss of generality, we take the local edge homophily and GCN \cite{kipf-gcn} as instances. 
We set $P$ as the random variable for local edge homophily with its distribution as $\mathcal{D}_P$. 
Thus, the variance of $P$ exhibits how the NLD differs throughout the graph. The larger the variance of $P$ is, the more complex the Local Distribution Pattern will be, and vice versa. 
To prevent trivial discussion, we make a few assumptions\footnote{The detailed assumptions can be found in \autoref{sec:theory}} to simplify our analysis without loss of generality. 
We derive a learning guarantee considering the variance of $P$ as follows.

\begin{theorem}
Consider $\mathcal{G} = \{\mathcal{V}, \mathcal{E}, \{\mathcal{F}_c, c \in  \{0,1\}\}, \{\mathcal{D}_P,
P\sim \mathcal{D}_P\},k\}$, which follows assumptions in Appendix A.  For any node $v_i \in  V,$ the expectation of its pre-activation output of 1-layer GCN model is as follows:
\begin{equation*}
\small
    \mathbb{E}\left[\mathbf{h}_i\right] = \boldsymbol W\left(\frac{1}{k + 1}\mu\left(\mathcal{F}_{y_i}\right)+\frac{k}{k + 1}\mathbb{E}_{P\sim\mathcal{D}_P,c\sim B\left(y_i,p\right),\boldsymbol x_j\sim\mathcal{F}_c}[\boldsymbol x_j]\right).
\end{equation*}
For any $t > 0$, the probability that the Euclidean distance between the observation $h_i$ and its expectation is larger than $t$ is bounded as follows:
\begin{equation*}
        \small
		\begin{aligned}
			&\mathbb{P}\left(\left\|\mathbf{h}_i-\mathbb{E}[\mathbf{h}_i]\right\|_2\geq  t\right) \\
			\leq&2d\exp \left(-\frac{((k+1)t_2/\rho(\boldsymbol W)+\sqrt{d}C_{\boldsymbol{x}}+\sqrt{d}C_{\mu})^2}{2k d\sigma^2 + 4\sqrt{d}C_{\boldsymbol  x}((k+1)t_2/\rho(\boldsymbol{W})+\sqrt{d}C_{\boldsymbol{x}}+\sqrt{d}C_{\mu})/3} \right),
		\end{aligned}
\end{equation*}
where $\sigma^2 = 4k C_{\mu}^2\operatorname{Var}\left[P\right]+k C_{\tau}$.
\label{theorem:theo1}
\end{theorem}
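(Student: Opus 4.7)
The theorem contains two assertions, an expectation identity and a tail bound, and my plan is to prove them in that order.

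For the expectation, the standing assumption in Appendix~A that every node has the same self-loop-augmented degree $k{+}1$ lets me write the one-layer GCN pre-activation as $\mathbf{h}_i=\boldsymbol{W}\,(k{+}1)^{-1}\bigl(\boldsymbol{x}_i+\sum_{j\in\mathcal{N}_i}\boldsymbol{x}_j\bigr)$. Linearity of expectation pulls $\boldsymbol{W}$ outside and separates the self-term from the $k$ neighbor terms. The self-term immediately contributes $\boldsymbol{W}\mu(\mathcal{F}_{y_i})/(k{+}1)$. For each neighbor $v_j$ the generative chain is $P\sim\mathcal{D}_P$, then $c\sim B(y_i,P)$, then $\boldsymbol{x}_j\sim\mathcal{F}_c$; iterating $\mathbb{E}_P\mathbb{E}_{c\mid P}\mathbb{E}_{\boldsymbol{x}\mid c}$ reproduces the triple expectation in the statement, and summing the $k$ identical neighbor contributions gives the factor $k/(k{+}1)$.

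For the tail bound, I would reduce to a sum of independent, mean-zero vectors and invoke a Bernstein-type inequality. Setting $Y_j=\boldsymbol{x}_j-\mathbb{E}[\boldsymbol{x}_j]$ and using $\|\boldsymbol{W}v\|_2\le\rho(\boldsymbol{W})\|v\|_2$, the event $\|\mathbf{h}_i-\mathbb{E}\mathbf{h}_i\|_2\ge t$ forces $\|\sum_{j=0}^{k}Y_j\|_2\ge(k{+}1)t/\rho(\boldsymbol{W})$; triangle-inequality absorption of the self-term $\|Y_0\|_2\le\sqrt{d}(C_{\boldsymbol{x}}+C_\mu)$ produces the additive offset visible in the numerator of the exponent. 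The $k$ neighbor $Y_j$'s are independent (each draws its own $P$), satisfy $\|Y_j\|_\infty\le 2C_{\boldsymbol{x}}$, and a coordinate-wise scalar Bernstein inequality followed by a union bound across the $d$ coordinates yields the $2d$ prefactor together with the characteristic Bernstein denominator $2V+4\sqrt{d}C_{\boldsymbol{x}}(\cdot)/3$. The variance bound $V\le k d\sigma^2$ with $\sigma^2=4kC_\mu^2\operatorname{Var}[P]+kC_\tau$ follows from the law of total variance applied to a single neighbor: $\operatorname{Var}(\boldsymbol{x}_j)=\mathbb{E}_P[\operatorname{Var}(\boldsymbol{x}_j\mid P)]+\operatorname{Var}_P(\mathbb{E}[\boldsymbol{x}_j\mid P])$. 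The outer piece equals $\operatorname{Var}[P]\,\|\mu(\mathcal{F}_{y_i})-\mu(\mathcal{F}_{1-y_i})\|^2\le 4C_\mu^2\operatorname{Var}[P]$, while the inner piece splits into a within-class covariance trace controlled by $C_\tau$ and a Bernoulli-in-$c$ term that merges into the same $C_\mu^2$ bound; independence across neighbors scales both by the factor $k$.

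The main obstacle I expect is bookkeeping rather than a deep new idea: three subtleties must align simultaneously. First, the $\rho(\boldsymbol{W})$ transfer must keep the self-term correction on the correct side of the inequality, so that the shifted threshold $(k{+}1)t/\rho(\boldsymbol{W})+\sqrt{d}(C_{\boldsymbol{x}}+C_\mu)$ indeed upper-bounds the tail and not the reverse. Second, the move from coordinate-wise $\ell_\infty$ bounds on $\boldsymbol{x}_j$ to vector $\ell_2$ bounds must carry the $\sqrt{d}$ factors consistently into both the numerator and the denominator of the Bernstein ratio. Third, the variance decomposition must exploit the assumption that neighbors carry independent copies of $P$ rather than a shared one: a shared $P$ would inject a cross-covariance term growing as $k^2$ and would break the stated linear-in-$k$ form of $\sigma^2$. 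Once these three are reconciled, the rest is a direct Bernstein substitution.
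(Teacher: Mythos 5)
Your proposal follows essentially the same route as the paper's proof: the same decomposition of the one-layer GCN output into self-term plus $k$ neighbor terms, the same iterated-expectation computation, and the same tail argument via the law of total variance (yielding $4C_\mu^2\operatorname{Var}[P]+C_\tau$ per neighbor coordinate), a coordinate-wise Bernstein inequality, a union bound over the $d$ coordinates giving the $2d$ prefactor, and the $\rho(\boldsymbol W)$ spectral-norm transfer. The subtleties you flag (the direction of the self-term offset, the $\sqrt{d}$ bookkeeping, and the independence of neighbors' contributions needed to keep $\sigma^2$ linear in $k$) are exactly the points the paper's proof has to navigate, so your plan is faithful to it.
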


From \autoref{theorem:theo1} we demonstrate that the Euclidean distance between the output embedding of a node and its expectation is small when the variance of $P$ is relatively small. However, as the complexity of LDP increases, the upper bound of the learning guarantee will rapidly grow, which indicates that the traditional learning algorithm is no longer promising under this circumstance. 
Therefore, it is necessary to design an adaptive convolution mechanism to adjust the convolution operator based on nodes' various distribution patterns.

\subsection{Connection to existing GNNs} 
\paragraph{MWGNN on identical NLD degenerates into GNNs with three channels} 
When learning graphs with identical NLD, the $\boldsymbol{ \mathcal W}$ can not help to distinguish the nodes from the three distributions. However, we can still learn the adaptive weights by $\Psi_f$ and $\Psi_t$ with $\boldsymbol{X}, \boldsymbol{X}_\text{LDP}$. Moreover, if we remove the independent convolution channels, MWGNN degenerates to advanced GAT with two types of attention mechanisms.


\paragraph{Innovation design of MWGNN}
The two stages of MWGNN could be related to two types of methods. First, the form of Distribution-based Meta-Weight is like a kind of attention for aggregation. Unlike GAT measuring the similarity between nodes embeddings along the edges, we consider the local node distributions to generate the weights $\boldsymbol S$ from two aspects. The meta-weights give a description of a sub-graph around central nodes, and the pair-wise correlation is implicitly contained in the $\boldsymbol S$.
In addition, the design of Independent Convolution Channels is related to the residual layer. \cite{resgcn, gcnii} also introduced feature residual layers into GNNs. In addition to the residual layer of features, we also add a residual layer of topology. In this way, the final output of MWGNN contains three channels of representations, and the topology information is both explicitly and implicitly embedded into the representations.
\begin{table*}[htb]
    \caption{ 
    The summary of mean and standard deviation of accuracy over all runs.
    The best results for each dataset is highlighted in gray.
    "-" stands for Out-Of-Memory.
    }
    \label{table:real-benchmark}
    \centering
        \begin{tabular}{lcccccccc}
        \toprule 
        &  \textbf{Cora} & \textbf{Citeseer} & \textbf{Pubmed} & \textbf{Chameleon} & \textbf{Squirrel} & \textbf{Texas} & \textbf{Cornell}  \\ 
        \midrule
	
		MLP &  60.02 $\pm$ 0.75 &  53.36 $\pm$ 1.40 &  63.40 $\pm$ 5.03 &  48.50 $\pm$ 2.49 &  35.38 $\pm$ 1.66 &  75.95 $\pm$ 5.06 &  77.13 $\pm$ 5.32  \\ 
		GCN &  80.50 $\pm$ 0.50 &  70.80 $\pm$ 0.50 &  79.00 $\pm$ 0.30 &  38.22 $\pm$ 2.67 &  27.12 $\pm$ 1.45 &  58.05 $\pm$ 4.81 &  56.87 $\pm$ 5.29 \\ 
		GAT &  83.00 $\pm$ 0.70 &  72.50 $\pm$ 0.70 &  79.00 $\pm$ 0.30 &  43.07 $\pm$ 2.31 &  31.70 $\pm$ 1.85 &  57.38 $\pm$ 4.95 &  54.95 $\pm$ 5.63  \\ 
		\midrule
GPR-GNN &  \hl{83.69 $\pm$ 0.47} &  71.51 $\pm$ 0.29 &  79.77 $\pm$ 0.27 &  49.56 $\pm$ 1.71 &  37.21 $\pm$ 1.15 &  80.81 $\pm$ 2.55 &  78.38 $\pm$ 4.01   \\ 
		CPGNN-MLP-1 &  79.50 $\pm$ 0.38 &  71.76 $\pm$ 0.22 &  77.45 $\pm$ 0.24 &  49.25 $\pm$ 2.83 &  33.17 $\pm$ 1.87 &  80.00 $\pm$ 4.22 &  \hl{80.13 $\pm$ 6.47}   \\ 
		CPGNN-MLP-2 &  78.21 $\pm$ 0.93 &  71.99 $\pm$ 0.39 &  78.26 $\pm$ 0.33 &  51.24 $\pm$ 2.43 &  28.86 $\pm$ 1.78 &  79.86 $\pm$ 4.64 &  79.05 $\pm$ 7.78 \\ 
		CPGNN-Cheby-1 &  81.13 $\pm$ 0.21 &  69.72 $\pm$ 0.59 &  77.79 $\pm$ 1.06 &  48.29 $\pm$ 2.02 &  36.17 $\pm$ 2.87 &  76.89 $\pm$ 4.95 &  75.00 $\pm$ 7.64  \\ 
		CPGNN-Cheby-2 &  77.68 $\pm$ 1.55 &  69.92 $\pm$ 0.46 &  78.81 $\pm$ 0.28 &  50.95 $\pm$ 2.46 &  31.29 $\pm$ 1.26 & 76.89 $\pm$ 5.83 &  75.27 $\pm$ 7.80  \\ 
		AM-GCN &  81.70 $\pm$ 0.71 &  71.72 $\pm$ 0.55 &  -  &  56.70 $\pm$ 3.44 &  - &  74.41 $\pm$ 4.50 &  74.11 $\pm$ 5.53 \\ 
		H2GCN &  81.85 $\pm$ 0.38 &  70.64 $\pm$ 0.65 &  79.78 $\pm$ 0.43 &  59.39 $\pm$ 1.58 & 37.90 $\pm$ 2.02 &  75.13 $\pm$ 4.95 &  78.38 $\pm$ 6.62 \\ 
		\midrule
MWGNN &  83.30 $\pm$ 0.62 &  \hl{72.90 $\pm$ 0.47} &  \hl{82.30 $\pm$ 0.64} &  \hl{79.54 $\pm$ 1.28} &  \hl{75.41 $\pm$ 1.83} &  \hl{81.37 $\pm$ 4.27} &  79.24 $\pm$ 5.23 \\ 
        \bottomrule
    \end{tabular}
\end{table*}

\section{Experiment}
\label{sec:experiment}

\subsection{Datasets}
The proposed MWGNN is evaluated on nine real-world datasets 
and two types of synthetic datasets.


\subsubsection{Real-world datasets} 
The detailed information is in \autoref{table:real-world-datasets}. We use datasets considering both homophily and heterophily.
Cora, Citeseer, and Pubmed \cite{Planetoid} are widely adopted citation datasets with strong edge homophily; In contrast, Texas and Cornell \cite{Geom-GCN} are heterophily datasets; The situation of Chameleon and Squirrel \cite{WikipediaNetwork} are rather complex, with both homophily and heterophily combined.


\subsubsection{Synthetic datasets}

For synthetic benchmarks, we randomly generate graphs as below, referring to \cite{Abu-El-HaijaPKA19} and \cite{AMGCN}: 
(1) Labels: we randomly assign $C$ classes of labels to $N$  
nodes. 
(2) Node features: for the nodes with the same label, we use one Gaussian distribution to generate $d$-dimension node features. The Gaussian distributions for the $C$ classes of nodes have the same co-variance matrix, 
but the mean values of these $C$ Gaussian distributions are distinguishable. 
(3) Edges: the probability of building edges follows Bernoulli distributions controlled by  $\boldsymbol{B} \in \mathbb{R}^{C \times C}$. 
In particular, the probability of building an edge between node $v_i$ and $v_j$ follows the $\operatorname{Bernoulli} (\boldsymbol{B}_{y_i y_j} )$, where $y_i$ and $y_j$ are the node labels of $v_i$ and $v_j$. 

In addition, we further generate graphs combining different distributions (i.e. various Local Edge Homophily distributions) to demonstrate the situation where both homophilic and heterophilic data are mixed and tangled. Below are the details: 
(1) Generate two graphs $G_1(\mathcal V_1, \mathcal E_1)$ and $G_2(\mathcal V_2, \mathcal E_2)$ controlled by $\boldsymbol{B}^1$ and  $\boldsymbol{B}^2$ respectively. In details, we set the value of  $\boldsymbol{B}^1_{ij}$ high when $y_i = y_j$ and low when $y_i \not = y_j$ to build a graph with high homophily. 
Likewise, we set the value of  $\boldsymbol{B}^2_{ij}$ low when $y_i = y_j$ and high when $y_i \not = y_j$ to build a graph with low homophily. 
(2) Combine $G_1$ and $G_2$ by  randomly assign edges between nodes in $G_1$ and $G_2$ with a probability of $p$.

We generate three combined datasets: C.Homo
(short for Combined graphs with Homophily), C.Mixed (short for Combined graphs with mixed Homophily and Heterophily), and C.Heter. Detailed information of synthetic datasets is in \autoref{table:synthetic-datasets}.

\subsection{Settings}

We evaluate MWGNN on the semi-supervised node classification task compared with state-of-the-art methods. For citation datasets (Cora, Citeseer, and Pubmed), we use the public split recommended by \cite{Planetoid}, fixed 20 nodes per class for training, 500 nodes for validation, and 1000 nodes for the test. 
For web page networks (Texas, and Cornell), we adopt the public splits by \cite{Geom-GCN}, with an average train/val/test split ratio of 48\%/32\%/20\%\footnote{ \cite{Geom-GCN} claims that the ratios are 60\%/20\%/20\%, which is different from the actual data splits shared on GitHub.}. For Wikipedia networks (Chameleon and Squirrel) we use the public splits provided by \cite{WikipediaNetwork},  with an average train/val/test split ratio of 48\%/32\%/20\%. 

We use the Adam Stochastic Gradient Descent optimizer \cite{Adam_sgd} with a learning rate $\eta \in \{10^{-2}, 10^{-3}, 10^{-4}\}$, a weight decay of $5\times 10^{-4}$, and a maximum of 200 epochs with early stopping to train all the models. The number of hidden layers is set to 2, and the dimensions of hidden representations are set to 128 for fairness. For GAT-based models, the number of heads is set to 4.

\subsection{Evaluation on Real-world Benchmarks}

We compare the performance of MWGNN to the state-of-the-art methods\footnote{Note that on Chameleon and Squirrel, we reuse the results H2GCN reports, as they use public splits by \cite{Geom-GCN}. The results of GPR-GNN and CPGNN are different from their reports because they use their own splits rather than the public splits.} in \autoref{table:real-benchmark}.  Compared with all baselines, the proposed MWGNN generally achieves or matches the best performance on all datasets. 
Especially, MWGNN achieves an improvement of over 20\% on Chameleon and Squirrel, demonstrating the effectiveness of MWGNN while the graph data is not homophily- or heterophily- dominated but a combined situation of the both.


\subsection{Evaluation on Synthetic Benchmarks}

\begin{figure*} 
    \centering
    \begin{subfigure}[b]{0.55\linewidth}
        \includegraphics[]{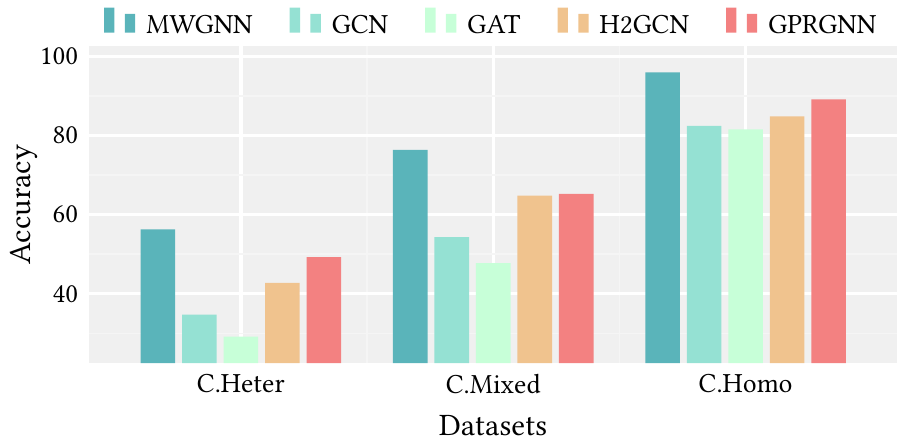}
    \end{subfigure} %
    \begin{subfigure}[b]{0.43\linewidth}    
        \includegraphics[]{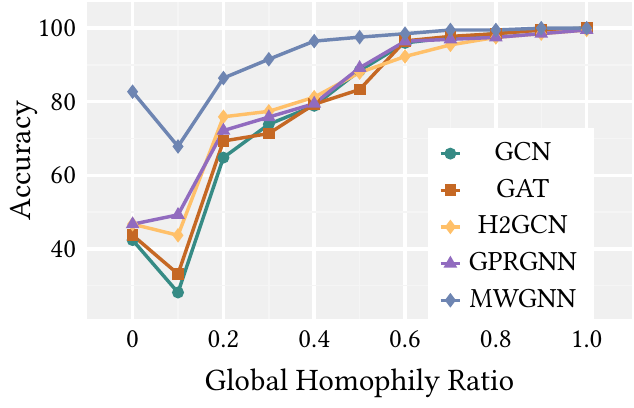}  
    \end{subfigure} 
    \caption{
        MWGNN and other baselines on synthetic datasets.
        }
    \label{fig:synthetic-benchmark}
\end{figure*}

To better investigate the performance of MWGNN on datasets with different Global/Local Edge Homophily distributions, we conduct experiments on a series of synthetic datasets. On the one hand, we test MWGNN on a set of synthetic graphs whose global edge homophily $h$ evenly ranges from 0 to 1 in \autoref{fig:synthetic-benchmark}.

MWGNN outperforms GCN and GAT on all homophily settings. On the other hand, we test our model on three combined graphs C.Homo, C.Mixed, C.Heter. As \autoref{fig:synthetic-benchmark} reveals, MWGNN disentangles the combined data well and achieves good results on all three synthetic graphs. Besides, on C.Mixed, GCN and GAT perform well on the sub-graph where Local Edge Homophily is high, with an accuracy over 95\%. On the contrary, up to 31\% nodes in the sub-graph where Local Edge Homophily is low are classified correctly. Meanwhile, MWGNN classifies the heterophily and homophily parts' nodes much better, with accuracy of 99\% and 69\%, separately. This observation suggests the significance of modeling the local distribution of graphs.


\subsection{Ablation Study}
To estimate the effectiveness of each part in MWGNN, we conduct an ablation study by removing one component at a time on our synthetic datasets, C.Heter, C.Mixed, and C.Homo. The results of the ablation study are in \autoref{table:ablation}.

\paragraph{Meta-Weight Generator} We remove the Distribution Based Meta-Weight Generator of MWGNN by removing the $\boldsymbol{\hat D}$ in $\Psi_f$ and $\Psi_t$. From the results, we can see that, in C.Heter and C.Mixed, removing the Distribution Based Meta-Weight Generator explicitly hinders the expressive power of MWGNN. C.Mixed is an example of graphs with different local distributions, homophily and heterophily parts are entangled. Besides, although C.Heter is generated by combining two heterophily graphs, the $\boldsymbol{B}^1$ and $\boldsymbol{B}^2$ used to construct them are different. Therefore, the distributions in C.Heter are not the same. Without the Distribution Based Meta-Weight Generator, MWGNN can no longer capture the distributions and generate convolution adaptively. In addition, the performance of the model only drops a little because the patterns in C.Homo are relatively the same. The removal of the Distribution Based Meta-Weight Generator has little impact on the performance of the model. These results support our claim that Distribution Based Meta-Weight Generator is capable of capturing different patterns, which could be used in the Adaptive Convolution. 

\begin{table}[htb]
    \caption{Ablation Study: Accuracy of MWGNN and its variants on three synthetic combined graph.}
    \label{table:ablation}
    \centering
    \begin{tabular}{lllllllll}
        \toprule
        &  & \textbf{C.Heter} & \textbf{C.Mixed} & \textbf{C.Homo} &   \\ 
        \midrule 
        & MWGNN  & 56.28  & 76.38  & 95.98\\
        & w/o $\boldsymbol D$  &  47.24 &  69.84 & 94.73\\
        & w/o $\boldsymbol D_f$  &  51.75 &  74.38 & 96.48\\
        & w/o $\boldsymbol D_t$  &  50.76 &  71.32 & 94.98\\
        & w/o $\boldsymbol D_p$  & 52.26 & 73.78  & 95.21\\
        & w/o Indep. Channels  & 53.77  & 73.87  & 86.73\\
        \bottomrule
    \end{tabular}
    
\end{table}

We further analyse the impact of the three components of the Distribution Based Meta-Weight Generator by removing the three local distributions $\boldsymbol{\mathcal D}_t, \boldsymbol{\mathcal D}_f$, and $\boldsymbol{\mathcal D}_p$ one at a time. On C.Mixed, the removal of the local distributions causes the performance of MWGNN to decline in varying extents respectively, which testifies the effect of the proposed $\operatorname{Att}(\cdot)$ in \autoref{sec:att}.

\textit{Independent Channels.} To demonstrate the performance improved through the Independent Convolution Channels for Topology and Feature, we test MWGNN after disabling it in \autoref{eq:indenpent-channels}.
The results suggest that our explicit utilization of the independent channel for topology information 
helps the model to achieve better results. Especially, the shared patterns in C.Homo lead to the a improvement.

\subsection{Parameter Analysis}

We investigate the sensitivity of hyper-parameters of MWGNN on Cora and Squirrel datasets. 
To be specific, the hyper-parameter $\alpha$ controlling the ratio of convolution weight $\boldsymbol S_f$ and $\boldsymbol S_d$ and the number of hops we consider when modeling the local distribution.

\textit{Analysis of $k$.}
We test MWGNN with different hop numbers $k$, varying it from 0 to 4. As \autoref{fig:parameter} shows, as $k$ increases, the performance is generally stable. Besides, a small $k$ is fair enough for MWGNN to reach satisfactory results.

\begin{figure} 
    \centering
    \begin{subfigure}[b]{0.48\linewidth}
        \includegraphics[]{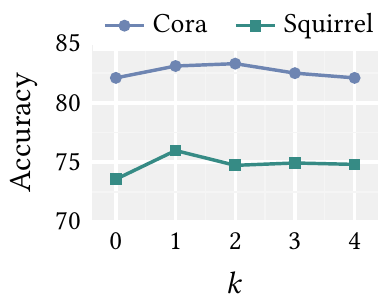}
    \end{subfigure} %
    \begin{subfigure}[b]{0.48\linewidth}    
        \includegraphics[]{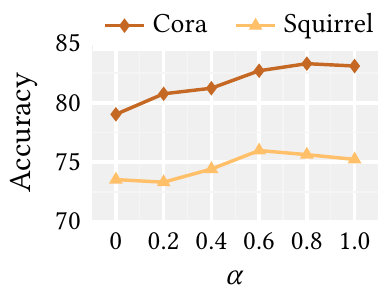}  
    \end{subfigure} 
    \caption{
        Parameter analysis over Cora and Squirrel on hop number $k$ and combine alpha $\alpha$.
        }
    \label{fig:parameter}
\end{figure}  

\textit{Analysis of $\alpha$.} To analyse the impact of $\alpha$ in 
\autoref{eq:decouple}, we study the performance of MWGNN with $\alpha$ ranging evenly from 0 to 1. 
Cora reaches its maximum at $\alpha=0.8$, while squirrel reaches its maximum at $\alpha=0.6$, which indicates that different graphs vary in the dependency on feature and topology. In addition, MWGNN is relatively stable when $\alpha$ changes around the maximum point.

\section{Related Work}

\subsection{Graph Neural Networks}
The Graph Neural Networks (GNNs) aim to map the graph and the nodes (sometimes edges) to a low-dimension space. Scarselli et al. \cite{GNNScarselli09} first propose a GNN model utilizing recursive neural networks to update topologically connected nodes' information recursively. Then, Bruna et al. expand GNNs to spectral space \cite{bruna-gcn}. Defferrard, Bresson, and Vandergheynst \cite{defferrard2016convolutional} generalize a simpler model, ChebyNet. 
Kipf and Welling \cite{kipf-gcn} propose Graph Convolution Networks (GCNs) which further simplify the graph convolution operation. 
GAT \cite{gat} introduces an attention mechanism in feature aggregation to refine convolution on graphs. GraphSAGE \cite{sage} presents a general inductive framework, which samples and aggregates feature from local neighborhoods of nodes with various pooling methods such as mean, max, and LSTM.

\subsection{GNNs on Heterophily Graphs}

Many models\cite{Geom-GCN,H2GCN,CPGNN,GPRGNN,AMGCN} design aggregation and transformation functions elaborately to obtain better compatibility for heterophily graph data and retain their efficiency on homophily data. Geom-GCN \cite{Geom-GCN} formulates graph convolution by geometric relationships in the resulting latent space. H2GCN  \cite{H2GCN} uses techniques of embedding separation, higher-order neighborhoods aggregation, and intermediate representation combination. CPGNN \cite{CPGNN} incorporates a label compatibility matrix for modeling the heterophily or homophily level in the graph to go beyond the assumption of strong homophily. GPR-GNN \cite{GPRGNN} adopts a generalized PageRank method to optimize node feature and topological information extraction, regardless of the extent to which the node labels are homophilic or heterophilic. AM-GCN \cite{AMGCN} fuses node features and topology information better and extracts the correlated information from both node features and topology information substantially, which may help in heterophily graph data.



\section{Conclusion}
\label{sec:conclusion}

In this paper, we focus on improving the expressive power of GNNs on graphs with different LDPs. We first show empirically that different LDPs do exist in some real-world datasets. 
With theoretical analysis, we show that
this variety of LDPs has an impact on the performance of traditional GNNs. To tackle this problem, we propose Meta-Weight Graph Neural Network, consisting of two key stages. First, 
to model the NLD of each node, we construct the Meta-Weight generator with multi-scale information, including structure, feature and position. Second, to decouple the correlation between node feature and topological structure, we conduct adaptive convolution with two aggregation weights and three channels. Accordingly, we can filter the most instructive information for each node and efficiently boost the node representations.
Overall, MWGNN outperforms corresponding GNNs on real-world benchmarks, while maintaining the attractive proprieties of GNNs. We hope MWGNN can shed light on the influence of distribution patterns on graphs, and inspire further development in the field of graph learning.

\begin{acks}
We would like to thank Zijie Fu and Yuyang Shi for their help on language polishing.
This work was supported by the National Natural Science Foundation of China (Grant No.61876006). 
\end{acks}

\bibliographystyle{ACM-Reference-Format}
\bibliography{citation}

\newpage

\appendix
\section{Detailed Information for Datasets}

\begin{table}[htbp]
    \caption{Statistics for real-world datasets.}
    \label{table:real-world-datasets}
    \centering
    \begin{tabular}{llllll}
        \toprule
        Dataset & $|\mathcal V|$ & $|\mathcal E|$ & $|\mathcal Y|$ & F & $ h $ \\
        \midrule
        \textbf{Cora} & 2708 & 10556 & 7 & 1433 & 0.81 \\
        \textbf{Citeseer} & 3327 & 9104 & 6 & 3703 & 0.74 \\
        \textbf{Pubmed} & 19717 & 88648 & 3 & 500 & 0.8 \\
        \textbf{Texas} & 183 & 325 & 5 & 1703 & 0.11 \\
        \textbf{Cornell} & 183 & 298 & 5 & 1703 & 0.31\\
        \textbf{Chameleon} & 2277 & 36101 & 5 & 1703 & 0.2 \\
        \textbf{Squirrel} & 5201 & 217073 & 5 & 2089 & 0.22 \\
        \bottomrule
    \end{tabular}
\end{table}

\begin{table}[htb]
    \caption{Statistics for combined synthetic datasets. $h_1, h_2$ denotes $h$ for separate graphs that are used to combine.}
    \label{table:synthetic-datasets}
    \centering
    \begin{tabular}{llllllll}
        \toprule
        Dataset & $|\mathcal V|$ & $|\mathcal E|$ & $|\mathcal Y|$ & F & $ h $& $h_1$& $h_2$ \\
        \midrule
        \textbf{C.Homo} & 1000 & 22937 & 5 & 100 & 0.51 & 0.50 & 0.50 \\
        \textbf{C.Mixed} & 1000 & 22705 & 5 & 100 & 0.39 & 0.10 & 0.75 \\
        \textbf{C.Heter} & 1000 & 15201 & 5 & 100 & 0.20 & 0.20 & 0.20\\
        \bottomrule
    \end{tabular}
\end{table}


\section{Proofs of THEOREM 4.1}\label{sec:theory}
\label{appendix:theory}

\paragraph{Notation} $\mathcal{G}=(\mathcal{V,E})$ denotes a graph with $d$-dimension node feature vector $\boldsymbol x_i$ for $v_i\in \mathcal{V}$. Features of all dimensions are bounded by a positive scalar $C_{\boldsymbol  x}$. $ y_i$ denotes the label for node $v_i$. $P$ is a random variable for local edge homophily with its distribution as $\mathcal{D}_P$. $\mathbf{h}_i$ denotes the embedding of node $ v_i$. $\boldsymbol W \in R^{d \times d}$ denotes the parameter matrix of 1-layer GCN model. $\rho(\boldsymbol W)$ denotes the largest singular value of $\boldsymbol W$. 
\paragraph{Assumptions on Graphs}
(1) $\mathcal{G}$ is k-regular. It can prevent us from trivial discussion on the expression of GCN and help us focus on the mechanism of massage passing.
(2) The features of node $v_i$ are sampled from feature distribution $\mathcal{F}_{ y_i}$, i.e, $\boldsymbol x_i\sim \mathcal{F}_{ y_i}$, with $\mu(\mathcal{F}_{y_i})=\mathbb E \left[\boldsymbol x_i|y_i\right]$ and $\tau(\mathcal{F}_{y_i})=\mathbb E \left[\boldsymbol x_i\circ\boldsymbol x_i|y_i\right]$. Similarly, $\mathcal{F}_{\overline{y}_i}$ denotes the feature distribution of nodes having labels other than $y_i$.
(3) Dimensions of $\boldsymbol x_i$ are independent to each other and they are all bounded by a positive scalar $C_{\boldsymbol x}$.
(4) Dimensions of $\mu(\mathcal{F}_{y_i})$ and $\tau(\mathcal{F}_{y_i})$ are all bounded by positive scalars $C_{\mu}$ and $C_{\tau}$, respectively.
(5) For node $v_i$, its local edge homophily $p$ is sampled from pattern distribution $\mathcal{D}_P$.  If $P = p$, node $v_i$'s neighbors' labels are independently sampled from Bernoulli distribution $B\left(y_i,p\right)$.
\begin{lemma}[Bernstein's inequality]
Let $X_1,...,X_n$ be independent bounded random variables with $ X_i\in  [a, b]$ for any $i$, where $-\infty < a \leq  b < +\infty$. Denote that  $\overline{X}=\frac{1}{n}(X_1+...+X_n)$ and $\sigma^2 = \sum_{i=1}^nVar[X_i]$. Then for any $t  > 0$, the following inequalities hold:
\begin{equation*}
    \begin{aligned}
        \mathbb{P}\left(\overline{X}-E\left[\overline{X}\right]\geq t\right)\leq \exp\left(-\frac{nt^2}{2\sigma^2+2t(b-a)/3}\right),\\
        \mathbb{P}\left(\overline{X}-E\left[\overline{X}\right]\leq-t\right)\leq \exp\left(-\frac{nt^2}{2\sigma^2+2t(b-a)/3}\right).\\
    \end{aligned}
\end{equation*}
\label{lemma:bern}
\end{lemma}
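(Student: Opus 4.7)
The one-layer GCN update on a $k$-regular graph with self-loops is $\mathbf h_i=\boldsymbol W\mathbf z_i$ with $\mathbf z_i=\tfrac{1}{k+1}\bigl(\boldsymbol x_i+\sum_{v_j\in N(v_i)}\boldsymbol x_j\bigr)$, since the symmetric normalisation $\tilde D^{-1/2}\tilde A\tilde D^{-1/2}$ assigns weight $1/(k{+}1)$ to every term in $\{v_i\}\cup N(v_i)$. By linearity of expectation it suffices to compute $\mathbb E[\boldsymbol x_i]$ and $\mathbb E[\boldsymbol x_j]$ for a generic neighbour; Assumption~(2) gives $\mathbb E[\boldsymbol x_i\mid y_i]=\mu(\mathcal F_{y_i})$, and the tower property conditioned successively on $P$, on $c_j\sim B(y_i,p)$, and on $\boldsymbol x_j\sim\mathcal F_{c_j}$ yields the inner expectation for the neighbour term. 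Combining the two contributions with weights $1/(k{+}1)$ and $k/(k{+}1)$ and multiplying by $\boldsymbol W$ recovers the stated closed form.

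\textbf{Vector-to-scalar reduction for the tail.} Using $\|\boldsymbol W\mathbf v\|_2\le\rho(\boldsymbol W)\|\mathbf v\|_2$, the tail of $\|\mathbf h_i-\mathbb E[\mathbf h_i]\|_2$ is controlled by the tail of $\|\mathbf z_i-\mathbb E[\mathbf z_i]\|_2$. Observing that $\|\mathbf u\|_2\ge s$ forces $|u_\ell|\ge s/\sqrt d$ for at least one coordinate $\ell$, a union bound over the $d$ coordinates and the two tails (producing the outer $2d$ prefactor) reduces the problem to controlling the scalar deviation of $(k+1)z_{i,\ell}=x_{i,\ell}+\sum_{v_j\in N(v_i)}x_{j,\ell}$ with reduced threshold $(k+1)t/(\rho(\boldsymbol W)\sqrt d)$. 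The additive terms $\sqrt d\,C_{\boldsymbol x}+\sqrt d\,C_\mu$ appearing in the numerator of the stated bound come from re-centering this scalar sum away from its true unconditional mean and toward the more convenient reference $\mu(\mathcal F_{y_i})_\ell$: the gap between these two centers is coordinate-wise bounded by $C_{\boldsymbol x}+C_\mu$ and inflates by $\sqrt d$ under the union-bound argument.

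\textbf{Bernstein and the variance budget.} I then invoke Bernstein's inequality (the lemma in Appendix~B) on the centred scalar sum. Feature boundedness (Assumption~3) gives $b-a=2C_{\boldsymbol x}$, which is precisely what produces the $4\sqrt d\,C_{\boldsymbol x}/3$ linear term in the Bernstein denominator once the reduced threshold is substituted. For the variance, I apply the law of total variance conditioned on $p$: the within-$p$ contribution is $\mathbb E_p[\mathrm{Var}(x_{j,\ell}\mid p)]\le C_\tau$ (by Assumption~4 on $\tau$), while the between-$p$ contribution equals $\bigl(\mu(\mathcal F_{y_i})_\ell-\mu(\mathcal F_{\overline y_i})_\ell\bigr)^{2}\mathrm{Var}(P)\le 4C_\mu^2\mathrm{Var}(P)$, since conditional on $p$ each neighbour's label is Bernoulli$(p)$-distributed toward $y_i$ and its feature is $\mathcal F_{c_j}$-distributed. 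Summing across the $k$ neighbours yields the stated $\sigma^2=4kC_\mu^2\mathrm{Var}(P)+kC_\tau$; plugging this and the reduced threshold into Bernstein's exponent $nt^2/(2\sigma^2+2t(b-a)/3)$ gives, after the $d$-fold coordinate rescaling, the denominator $2kd\sigma^2+4\sqrt d\,C_{\boldsymbol x}(\cdot)/3$ in the statement.

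\textbf{Main obstacle.} The delicate point is that neighbour features are only conditionally independent given the shared homophily $p$, so a naive unconditional Bernstein application is not legitimate. My plan is to apply Bernstein inside each realisation of $p$ and then take an outer $\mathcal D_P$-expectation of the resulting tail probability; this is exactly the mechanism that injects $\mathrm{Var}(P)$ into $\sigma^2$ rather than only $C_\tau$. Matching the constants on the mean-shift term to the stated $\sqrt d\,C_{\boldsymbol x}+\sqrt d\,C_\mu$, and ensuring the within-$p$ and between-$p$ channels combine additively rather than leaking into covariance terms across neighbours, will be the main bookkeeping hurdle.
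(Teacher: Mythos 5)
There is a fundamental mismatch here: the statement you were asked to prove is Lemma~\ref{lemma:bern}, i.e.\ Bernstein's inequality itself — a classical concentration bound for a sum of independent bounded random variables — but your proposal is a proof sketch of Theorem~\ref{theorem:theo} (the GCN concentration result). You explicitly \emph{invoke} Bernstein's inequality as a black box (``I then invoke Bernstein's inequality (the lemma in Appendix~B) on the centred scalar sum''), which means nothing in your write-up establishes the lemma you were assigned. The paper itself states this lemma without proof, treating it as a standard imported tool, so there is no internal argument for you to have matched; but a blind proof of the statement would still need to supply the classical derivation, and yours does not.

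Concretely, a proof of Lemma~\ref{lemma:bern} should proceed by the exponential moment (Chernoff) method: set $Y_i = X_i - \mathbb{E}[X_i]$, note $|Y_i| \leq b-a$, bound the moment generating function via the Taylor expansion
$\mathbb{E}\left[e^{\lambda Y_i}\right] \leq \exp\left(\frac{\lambda^2 \operatorname{Var}[X_i]/2}{1-\lambda(b-a)/3}\right)$
for $0 < \lambda < 3/(b-a)$, multiply over $i$ using independence, apply Markov's inequality to $e^{\lambda \sum_i Y_i}$, and optimize over $\lambda$; the lower-tail bound follows by applying the same argument to $-X_i$. None of these steps appear in your proposal. (Your sketch of the theorem is, on its own terms, broadly aligned with how the paper deduces Theorem~\ref{theorem:theo} from the lemma — the law-of-total-variance split giving $\sigma^2 = 4kC_\mu^2\operatorname{Var}[P] + kC_\tau$, the coordinatewise union bound producing the $2d$ prefactor, and the $\rho(\boldsymbol W)$ reduction all match the paper — but that is an answer to a different question.)
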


\begin{lemma}[Hoeffding’s Inequality]
Let $X_1, . . . , X_n$ be independent bounded random variables with $X_i\in  [a, b]$ for any $i$, where $-\infty < a \leq  b < +\infty$.  Denote that $\overline{X}=\frac{1}{n}(X_1+...+X_n)$. Then for any $t > 0$,the following
inequalities hold:
\begin{equation*}
    \begin{aligned}
        \mathbb{P}\left(\overline{X}-E\left[\overline{X}\right]\geq t\right)\leq \exp\left(-\frac{2nt^2}{(b-a)^2}\right),\\
        \mathbb{P}\left(\overline{X}-E\left[\overline{X}\right]\leq-t\right)\leq \exp\left(-\frac{2nt^2}{(b-a)^2}\right).\\
    \end{aligned}
\end{equation*}
\label{lemma:hoeff}
\end{lemma}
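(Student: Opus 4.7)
The plan is to prove both tail bounds by the classical Chernoff-bounding method, reducing the statement to a sub-Gaussian moment generating function estimate commonly known as Hoeffding's lemma. Let $Y_i := X_i - \mathbb{E}[X_i]$, so that each $Y_i$ is independent, mean-zero, and bounded in an interval of width $b-a$. For any $\lambda > 0$, Markov's inequality applied to the exponential, combined with independence of the $Y_i$, yields
\[
\mathbb{P}\bigl(\overline{X} - \mathbb{E}[\overline{X}] \geq t\bigr) \;\leq\; e^{-\lambda n t}\prod_{i=1}^{n}\mathbb{E}\bigl[e^{\lambda Y_i}\bigr].
\]
The task then reduces to a uniform upper bound on each factor $\mathbb{E}[e^{\lambda Y_i}]$.

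For the MGF estimate, I would exploit the convexity of $y \mapsto e^{\lambda y}$: writing any $y$ in the support of $Y_i$ as a convex combination of the two endpoints of that support and taking expectations (using $\mathbb{E}[Y_i]=0$) gives a closed-form bound $\mathbb{E}[e^{\lambda Y_i}] \leq e^{\varphi(\lambda)}$ for a smooth function $\varphi$ satisfying $\varphi(0) = \varphi'(0) = 0$. One recognizes $\varphi''(\lambda)$ as the variance of a random variable supported on the two endpoints of an interval of length $b-a$, under an exponential tilt by $\lambda$. By Popoviciu's inequality for variances of bounded random variables, $\varphi''(\lambda) \leq (b-a)^2/4$ uniformly in $\lambda$, so a second-order Taylor expansion of $\varphi$ about the origin yields $\mathbb{E}[e^{\lambda Y_i}] \leq \exp\bigl(\lambda^2 (b-a)^2/8\bigr)$.

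Plugging this back into the Chernoff bound gives
\[
\mathbb{P}\bigl(\overline{X} - \mathbb{E}[\overline{X}] \geq t\bigr) \;\leq\; \exp\Bigl(-\lambda n t + \tfrac{n\lambda^2(b-a)^2}{8}\Bigr),
\]
and minimizing the right-hand side at $\lambda^{*} = 4t/(b-a)^2$ produces the claimed exponent $-2nt^2/(b-a)^2$. The lower-tail inequality follows immediately by repeating the same argument with the variables $-X_i$, which are independent, bounded in an interval of identical width $b-a$, and whose empirical mean is $-\overline{X}$. The main obstacle is the MGF estimate itself: executing the convexity step carefully and establishing the uniform bound $\varphi''(\lambda)\leq (b-a)^2/4$ via Popoviciu (or equivalently by direct computation after the standard change of variables $u=\lambda(b-a)$, where the bound becomes $1/4$); once that lemma is in place, the remaining steps (Chernoff, MGF factorization by independence, and optimization in $\lambda$) are mechanical.
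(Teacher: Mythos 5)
Your argument is correct and is the canonical Chernoff--Hoeffding proof: exponentiate and apply Markov, factor the moment generating function by independence, bound each factor by $\exp\left(\lambda^2(b-a)^2/8\right)$ via the convexity/second-derivative (Popoviciu) argument, and optimize at $\lambda^* = 4t/(b-a)^2$ to obtain the exponent $-2nt^2/(b-a)^2$; the lower tail indeed follows by applying the same bound to $-X_i$. The paper itself states this lemma without proof, invoking it as a standard auxiliary result, so there is no in-paper argument to compare against; your derivation fills that gap with the textbook route and contains no errors.
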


\begin{lemma}[The Union Bound]
For any events $A_1,A_2,...,A_n$, we have
\begin{equation*}
    \mathbb{P}\left(\bigcup_{i=1}^nA_i\right) \leq \sum_{i=1}^n\mathbb{P}\left(A_i\right).
\end{equation*}
\label{lemma:ub}
\end{lemma}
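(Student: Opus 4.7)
The plan is: compute $\mathbb E[\mathbf h_i]$ by direct expansion, then reduce the tail event to a one-dimensional Bernstein estimate via an operator-norm step, a coordinate-wise union bound, and a conditional decomposition that tames the shared $P$.

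\emph{Expectation.} Under $k$-regularity and the symmetric-normalized GCN with self-loops, every aggregation weight equals $1/(k+1)$, giving $\mathbf h_i=\tfrac{\boldsymbol W}{k+1}\bigl(\boldsymbol x_i+\sum_{v_j\in\mathcal N_i}\boldsymbol x_j\bigr)$. Linearity and Assumption (2) give $\mathbb E[\boldsymbol x_i]=\mu(\mathcal F_{y_i})$; the tower rule along the causal chain $P\to c_j\to\boldsymbol x_j$ of Assumption (5) gives $\mathbb E[\boldsymbol x_j]=\mathbb E_{P,c,\boldsymbol x_j}[\boldsymbol x_j]$ for each of the $k$ identically distributed neighbors. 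Summing yields the stated formula.

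\emph{Reduction to a single coordinate.} I would write $\mathbf h_i-\mathbb E[\mathbf h_i]=\tfrac{\boldsymbol W}{k+1}\boldsymbol Z$ with $\boldsymbol Z:=(\boldsymbol x_i-\mu(\mathcal F_{y_i}))+\sum_{j}(\boldsymbol x_j-\mathbb E[\boldsymbol x_j])$. The operator-norm inequality $\|\boldsymbol W\boldsymbol v\|_2\le\rho(\boldsymbol W)\|\boldsymbol v\|_2$ turns the target event into $\{\|\boldsymbol Z\|_2\ge(k+1)t/\rho(\boldsymbol W)\}$, and the elementary fact $\|\boldsymbol Z\|_2\ge s\Rightarrow\max_l|Z^{(l)}|\ge s/\sqrt d$ together with Lemma \ref{lemma:ub} produces the prefactor $2d$. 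Splitting $\boldsymbol Z$ into the ego piece (deterministically bounded by $C_{\boldsymbol x}+C_\mu$ per coordinate by Assumptions (3)--(4)) and the neighbor sum $Y^{(l)}:=\sum_{j\in\mathcal N_i}(\boldsymbol x_j^{(l)}-\mathbb E[\boldsymbol x_j^{(l)}])$, then tracking the $\sqrt d$ rescaling through the coordinate step, accounts for the additive $\sqrt d\,C_{\boldsymbol x}+\sqrt d\,C_\mu$ appearing in the stated numerator.

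\emph{Bernstein on $Y^{(l)}$ under shared $P$.} Applying Lemma \ref{lemma:bern} directly to $Y^{(l)}$ is invalid, because its $k$ summands share the common draw $P$ and are therefore only conditionally independent. I would circumvent this by writing each summand as $(\boldsymbol x_j^{(l)}-\mu(\mathcal F_{c_j})^{(l)})+(\mu(\mathcal F_{c_j})^{(l)}-\mathbb E[\boldsymbol x_j^{(l)}])$, then computing the per-summand variance by the law of total variance: the within-class piece is genuinely independent across $j$ and contributes $kC_\tau$ via Assumption (4), while the between-class piece---a bounded function of $(P,c_j)$ of range $\le2C_\mu$---splits into an inner conditional Bernoulli variance and an outer $\operatorname{Var}[P]$ component, producing the $4kC_\mu^2\operatorname{Var}[P]$ contribution. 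Combining both gives exactly $\sigma^2=4kC_\mu^2\operatorname{Var}[P]+kC_\tau$. Feeding this variance and the $O(C_{\boldsymbol x})$ range of the centered features into Bernstein and then composing with the coordinate reduction produces the denominator $2kd\sigma^2+\tfrac{4}{3}\sqrt d\,C_{\boldsymbol x}\bigl((k+1)t/\rho(\boldsymbol W)+\sqrt d\,C_{\boldsymbol x}+\sqrt d\,C_\mu\bigr)$.

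\emph{Main obstacle.} The real difficulty is the shared $P$: naive Bernstein does not apply, and the fix requires the law-of-total-variance decomposition above, together with a careful range bound on $\mu(\mathcal F_{c_j})-\mathbb E[\mu(\mathcal F_c)]$, to convert the correlation into the clean additive $\operatorname{Var}[P]$ term with the right $4C_\mu^2$ coefficient. The other steps---operator norm, union bound, the $\sqrt d$ loss from the coordinate reduction, and the $C_{\boldsymbol x}+C_\mu$ shift from the ego term---are routine bookkeeping that has to be tracked consistently so that the constants line up with those in the stated numerator and denominator.
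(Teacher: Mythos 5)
There is a fundamental mismatch here: the statement you were asked to prove is the union bound, $\mathbb{P}\left(\bigcup_{i=1}^n A_i\right) \leq \sum_{i=1}^n \mathbb{P}(A_i)$, but what you have written is a proof sketch of the paper's Theorem 4.1 (the concentration bound for the pre-activation GCN output). Indeed, your own argument \emph{invokes} ``Lemma \ref{lemma:ub}'' as a tool in the coordinate-wise reduction step, so you are using the very statement you were supposed to establish, while proving something else entirely. Nothing in your proposal addresses why the probability of a union of events is at most the sum of their probabilities.

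A correct proof of the stated lemma is short and elementary, and none of the machinery you deploy (Bernstein, operator norms, law of total variance) is relevant to it. One standard route: observe that the indicator functions satisfy $\mathbf{1}_{\bigcup_{i=1}^n A_i} \leq \sum_{i=1}^n \mathbf{1}_{A_i}$ pointwise, and take expectations of both sides. Alternatively, disjointify by setting $B_1 = A_1$ and $B_i = A_i \setminus \bigcup_{j<i} A_j$, so that the $B_i$ are pairwise disjoint with $\bigcup_i B_i = \bigcup_i A_i$ and $B_i \subseteq A_i$; then countable additivity and monotonicity of $\mathbb{P}$ give $\mathbb{P}\left(\bigcup_i A_i\right) = \sum_i \mathbb{P}(B_i) \leq \sum_i \mathbb{P}(A_i)$. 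Either argument is two lines. (As an aside, your sketch of Theorem 4.1 does track the paper's actual proof of that theorem reasonably closely --- the conditional mixture decomposition of $\boldsymbol{x}_j$ given $P$, the law of total variance yielding $4C_\mu^2\operatorname{Var}[P] + C_\tau$ per summand, the coordinate-wise union bound with the $\sqrt{d}$ loss, and the $\rho(\boldsymbol{W})$ operator-norm step --- but that is not the statement under review.)
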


\begin{theorem}
Consider $\mathcal{G} = \{\mathcal{V}, \mathcal{E}, \{\mathcal{F}_c, c \in  \{0,1\}\}, \{\mathcal{D}_P,
P\sim \mathcal{D}_P\},k\}$, which follows assumptions (1) - (5). For any node $v_i \in  V,$ the expectation of its pre-activation output of 1-layer GCN model is as follows:
\begin{equation*}
    \mathbb{E}\left[\mathbf{h}_i\right] = \boldsymbol W\left(\frac{1}{k + 1}\mu\left(\mathcal{F}_{y_i}\right)+\frac{k}{k + 1}\mathbb{E}_{P\sim\mathcal{D}_P,c\sim B\left(y_i,p\right),\boldsymbol x_j\sim\mathcal{F}_c}[\boldsymbol x_j]\right).
\end{equation*}
For any $t > 0$, the probability that the Euclidean distance between the observation $\mathbf{h}_i$ and its expectation is larger than $t$ is bounded as follows:
	\begin{equation*}
		\begin{aligned}
			&\mathbb{P}\left(\left\|\mathbf{h}_i-\mathbb{E}[\mathbf{h}_i]\right\|_2\geq  t\right) \\
			\leq&2d\exp \left(-\frac{((k+1)t_2/\rho(\boldsymbol{W})+\sqrt{d}C_{\boldsymbol{x}}+\sqrt{d}C_{\mu})^2}{2kd\sigma^2 + 4\sqrt{d}C_{\boldsymbol  x}((k+1)t_2/\rho(\boldsymbol{W})+\sqrt{d}C_{\boldsymbol{x}}+\sqrt{d}C_{\mu})/3} \right),
		\end{aligned}
	\end{equation*}
	where $\sigma^2 = 4kC_{\mu}^2\operatorname{Var}\left[P\right]+kC_{\tau}$.
\label{theorem:theo}
\end{theorem}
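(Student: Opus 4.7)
The plan is to split the theorem into two sub-claims and handle them separately: (i) computing $\mathbb{E}[\mathbf{h}_i]$ via linearity and the tower rule, and (ii) deriving the concentration bound via a spectral-norm reduction, a union bound over coordinates, and Bernstein's inequality applied coordinate-wise. For (i), since $\mathcal{G}$ is $k$-regular, the 1-layer symmetrically renormalised GCN reduces to $\mathbf{h}_i = \boldsymbol{W}\bar{\boldsymbol{x}}$ with $\bar{\boldsymbol{x}} = \tfrac{1}{k+1}\bigl(\boldsymbol{x}_i + \sum_{j \in \mathcal{N}_i} \boldsymbol{x}_j\bigr)$. Taking expectation, the ego-node term yields $\mu(\mathcal{F}_{y_i})$ by assumption (2), and each of the $k$ identical neighbour terms yields $\mathbb{E}_{P,c,\boldsymbol{x}_j}[\boldsymbol{x}_j]$ by the tower rule over $P \sim \mathcal{D}_P$, $c \sim B(y_i, P)$, $\boldsymbol{x}_j \sim \mathcal{F}_c$. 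Collecting produces the stated formula.

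For (ii) I would combine three ingredients. First, submultiplicativity of the spectral norm gives $\|\mathbf{h}_i - \mathbb{E}[\mathbf{h}_i]\|_2 \leq \rho(\boldsymbol{W})\,\|\bar{\boldsymbol{x}} - \mathbb{E}[\bar{\boldsymbol{x}}]\|_2$, so the event $\{\|\mathbf{h}_i - \mathbb{E}[\mathbf{h}_i]\|_2 \geq t\}$ is contained in $\{\|\bar{\boldsymbol{x}} - \mathbb{E}[\bar{\boldsymbol{x}}]\|_2 \geq t/\rho(\boldsymbol{W})\}$. Second, $\|v\|_2 \leq \sqrt{d}\,\|v\|_\infty$ together with Lemma~3 reduces this to $2d$ times the one-coordinate two-sided tail $\mathbb{P}(|\bar{x}_l - \mathbb{E}[\bar{x}_l]| \geq t/(\rho(\boldsymbol{W})\sqrt{d}))$. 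Third, for each coordinate $l$ I would apply Bernstein (Lemma~1) to $(k+1)(\bar{x}_l - \mathbb{E}[\bar{x}_l]) = \sum_{j \in \{i\} \cup \mathcal{N}_i}(x_{j,l} - \mathbb{E}[x_{j,l}])$, whose centred summands are bounded in absolute value by $2C_{\boldsymbol{x}}$ (using assumption (3) together with $C_\mu \leq C_{\boldsymbol{x}}$). The essential input is the per-variable variance: by the law of total variance, each neighbour contributes
\[
\operatorname{Var}[x_{j,l}] = \mathbb{E}\bigl[\operatorname{Var}[x_{j,l} \mid P]\bigr] + \operatorname{Var}\bigl[\mathbb{E}[x_{j,l} \mid P]\bigr] \leq C_{\tau} + 4 C_{\mu}^2 \operatorname{Var}[P],
\]
where the second term uses $\mathbb{E}[x_{j,l} \mid P] = P\,\mu(\mathcal{F}_{y_i})_l + (1-P)\,\mu(\mathcal{F}_{\overline{y}_i})_l$ and assumption (4). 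Summing over the $k$ neighbours yields exactly $\sigma^2 = 4kC_{\mu}^2\operatorname{Var}[P] + kC_{\tau}$, and the additive offsets $\sqrt{d}\,C_{\boldsymbol{x}} + \sqrt{d}\,C_{\mu}$ appearing in the numerator of the stated exponent arise when the triangle inequality is used to pass from the centred sum to its uncentred form before invoking Lemma~1.

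The principal obstacle is that assumption (5) only guarantees \emph{conditional} independence of the neighbour features given $P$; unconditionally they all depend on the common realisation of $P$, so Bernstein cannot be applied to the neighbour sum out of the box. The cleanest rigorous route is to condition on $P$, invoke Lemma~1 in the conditional world where the neighbours are i.i.d.\ mixtures of $\mathcal{F}_{y_i}$ and $\mathcal{F}_{\overline{y}_i}$, and then separately control the between-groups variability of the conditional means---precisely the $4kC_{\mu}^2\operatorname{Var}[P]$ piece of $\sigma^2$ produced by the decomposition above. Matching this two-stage concentration argument to the single exponential bound in the statement, and carefully tracking the triangle-inequality offsets that yield the additive constants in the numerator, is where the technical care lies; the remaining manipulations are routine algebra inside the Bernstein exponent.
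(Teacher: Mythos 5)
Your proposal follows essentially the same route as the paper's proof: linearity plus the tower rule for the expectation; then a spectral-norm reduction via $\rho(\boldsymbol W)$, a union bound over the $d$ coordinates, and a coordinate-wise application of Bernstein's inequality with the variance proxy $\sigma^2 = 4kC_{\mu}^2\operatorname{Var}[P]+kC_{\tau}$ obtained from the law of total variance, exactly as in the appendix. The one place you diverge is the dependence issue you flag, and there you are being more careful than the paper itself: the paper simply asserts that $\{\boldsymbol x_j^l : j\in\mathcal N(i)\}$ "is a set of independent random variables" and feeds the unconditional (total) variance into Lemma 1, whereas assumption (5) only guarantees independence of the neighbours' labels \emph{conditionally} on $P$; unconditionally they are coupled through the shared draw of $P$, so Bernstein does not apply out of the box. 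Your proposed two-stage remedy --- condition on $P$, apply Bernstein in the conditional world, then separately control the fluctuation of the conditional means, which is precisely the $4kC_{\mu}^2\operatorname{Var}[P]$ piece --- is the honest way to close this, though as you concede it does not immediately collapse back into the single exponential of the stated bound (one typically ends up with a sum of two tail terms, or must invoke a Bernstein variant for sums that are conditionally independent given a latent variable). In short: your plan reproduces the paper's argument at every step where that argument is airtight, and identifies the one step where it is not; to fully match the theorem as stated you would either have to accept the paper's unjustified independence claim or rework the exponent to accommodate the two-stage decomposition.
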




\begin{proof}
For a single layer GCN model, the process can be written in the following form for node $v_i$
\begin{equation*}
\mathbf{h}_i=\sum_{j\in \left\{i\right\}\cup \mathcal{N}\left(i\right)}\frac{1}{k + 1}\boldsymbol W\boldsymbol x_j,
\end{equation*}
so that the expectation of $\mathbf{h}_i$ can be derived as follows:
\begin{equation*}
    \begin{aligned}
        \mathbb{E}\left[\mathbf{h}_i\right]&=\mathbb{E}\left[\sum_{j\in \left\{i\right\}\cup \mathcal{N}\left(i\right)}\frac{1}{k + 1}\boldsymbol W \boldsymbol x_j\right]\\ 
        &=\frac{1}{k + 1}\boldsymbol W\mathbb{E}\left[\boldsymbol x_i\right]+\sum_{j\in \mathcal{N}\left(i\right)}\frac{1}{k + 1}\boldsymbol W\mathbb{E}\left[\boldsymbol x_j\right]\\ 
        &= \frac{1}{k + 1}\boldsymbol W\mu\left(\mathcal{F}_{y_i}\right)+\frac{k}{k + 1}\boldsymbol W\mathbb{E}_{P\sim\mathcal{D}_P,c\sim B\left(y_i,p\right),\boldsymbol x_j\sim\mathcal{F}_c}\left[\boldsymbol x_j\right]\\ 
        &= \boldsymbol W\left(\frac{1}{k + 1}\mu\left(\mathcal{F}_{y_i}\right)+\frac{k}{k + 1}\mathbb{E}_{P\sim\mathcal{D}_P,c\sim B\left(y_i,p\right),\boldsymbol x_j \sim\mathcal{F}_c}\left[\boldsymbol x_j\right]\right).\\ 
    \end{aligned}
\end{equation*}
When $P$ is given, the conditional distribution of $\boldsymbol x_j$ follows
	\[\left(\boldsymbol{x}_j|P=p\right)\sim\begin{cases}
		p\mathcal{F}_{y_i}+(1-p)\mathcal{F}_{\overline{y}_i},&j\in\mathcal{N}(i),\\
		\mathcal{F}_{y_i},&j=i.
	\end{cases}.\]
	Let $\boldsymbol x_j^l,l = 1,...,d$ denote the $l$-th element of $\boldsymbol x_j$. Then, for any dimension $l$,  $\left\{\boldsymbol x_j^l,j\in \mathcal{N}\left(i\right)\right\}$ is a set of independent random variables. When $j\in\mathcal{N}(i)$
	\begin{equation*}
		\begin{aligned}
			\operatorname{Var}\left[\mathbb{E}\left[\boldsymbol{x}_j^l\big |P\right]\right]=&\operatorname{Var}\left[P\mu^l(\mathcal{F}_{y_i})+(1-P)\mu^l(\mathcal{F}_{\overline y_i})\right]\\
			=&(\mu^l(\mathcal{F}_{y_i})-\mu^l(\mathcal{F}_{\overline y_i}))^2\operatorname{Var}\left[P\right].\\
			\leq&4C_{\mu}^2\operatorname{Var}\left[P\right].\\
			\mathbb{E}\left[\operatorname{Var}\left[\boldsymbol{x}_j^l\big 
			|P\right]\right]=&\mathbb E\left[\mathbb E\left[\left(\boldsymbol{x}_j^l\right)^2\big |P\right]-\mathbb E\left[\boldsymbol{x}_j^l\big |P\right]^2\right]\\
			=&\mathbb E\left[P\tau^l(\mathcal{F}_{y_i})+(1-P)\tau^l(\mathcal{F}_{\overline y_i})\right]\\
	    &-\mathbb E\left[\left(P\mu^l(\mathcal{F}_{y_i})+(1-P)\mu^l(\mathcal{F}_{\overline y_i})\right)^2\right]\\
			=&-\mathbb E\left[P^2\right](\mu^l(\mathcal{F}_{y_i})-\mu^l(\mathcal{F}_{\overline y_i}))^2\\
			+\mathbb E\left[P\right]&\left(-2(\mu^l(\mathcal{F}_{y_i})-\mu^l(\mathcal{F}_{\overline y_i}))\mu^l(\mathcal{F}_{\overline y_i})+\tau^l(\mathcal{F}_{y_i})-\tau^l(\mathcal{F}_{\overline y_i})\right)\\
			+&\tau^l(\mathcal{F}_{\overline y_i})\\
			\leq&C_{\tau}.
		\end{aligned}
	\end{equation*}
	By the law of total variance, 
	\begin{equation*}
		\operatorname{Var}\left[\boldsymbol{x}_j^l\right]=\operatorname{Var}\left[\mathbb E \left[\boldsymbol{x}_j^l\big |P\right]\right]+\mathbb E\left[\operatorname{Var}\left[\boldsymbol{x}_j^l\big |P\right]\right]\leq4C_{\mu}^2\operatorname{Var}\left[P\right]+C_{\tau}.
	\end{equation*}
	Then for any $t_1 > 0$, we have the following bound by applying lemma \autoref{lemma:bern} and lemma \autoref{lemma:ub}:
	\begin{equation*}
		\begin{aligned}
			&\mathbb P\left[\frac{1}{k + 1}\left\|\sum_{j\in\{i\}\cup \mathcal{N}\left(i\right)}\left(\boldsymbol x_j^l-\mathbb E\left[\boldsymbol x_j^l\right]\right)\right\|\geq t_1\right]\\
			\leq&\mathbb P\left[\frac{1}{k}\left\|\sum_{j\in\mathcal{N}\left(i\right)}\left(\boldsymbol x_j^l-\mathbb E\left[\boldsymbol x_j^l\right]\right)\right\|\geq \frac{(k+1)t_1+C_{\boldsymbol{x}}+C_{\mu}}{k}\right]\\
			&+\mathbb P\left[\left\|\boldsymbol x_i^l-\mathbb E\left[\boldsymbol x_i^l\right]\right\|\geq C_{\boldsymbol{x}}+C_{\mu}\right]\\
			\leq&2 \exp \left(-\frac{((k+1)t_1+C_{\boldsymbol{x}}+C_{\mu})^2}{2k\sigma^2 + 4C_{\boldsymbol  x}((k+1)t_1+C_{\boldsymbol{x}}+C_{\mu})/3} \right).
		\end{aligned}
		\label{eq:bern}
	\end{equation*}
	where 
	$\sigma^2=4kC_{\mu}^2\operatorname{Var}\left[P\right]+kC_{\tau}$. By applying lemma \autoref{lemma:ub} to \autoref{lemma:bern}, the following holds:
	\begin{equation*}
		\begin{aligned}
			&\mathbb{P} \left( \frac{1}{k + 1}\left\|\sum_{j\in \left\{i\right\}\cup\mathcal{N}\left(i\right)}\left(\boldsymbol x_j-\mathbb{E}\left[\boldsymbol x_j\right]\right)\right\|_2\geq t_2 \right)\\
			\leq&\sum_{l=1}^{d}\mathbb P\left[\frac{1}{k + 1}\left\|\sum_{j\in\{i\}\cup \mathcal{N}\left(i\right)}\left(\boldsymbol x_j^l-\mathbb E\left[\boldsymbol x_j^l\right]\right)\right\|\geq\frac{t_2}{\sqrt{d}}\right]\\
			\leq&2d\exp \left(-\frac{((k+1)t_2+\sqrt{d}C_{\boldsymbol{x}}+\sqrt{d}C_{\mu})^2}{2kd\sigma^2 + 4\sqrt{d}C_{\boldsymbol  x}((k+1)t_2+\sqrt{d}C_{\boldsymbol{x}}+\sqrt{d}C_{\mu})/3} \right).
		\end{aligned}
		\label{eq:bound}
	\end{equation*}
	Furthermore, we have
	\begin{equation*}
		\begin{aligned}
			\left\|\mathbf{h}_i-\mathbb{E}\left[\mathbf{h}_i\right]\right\|_2 & \leq  \frac{1}{k + 1}\left\|\boldsymbol W\left(\sum_{j\in\left\{i\right\}\cup \mathcal{N}\left(i\right)}\boldsymbol x_j-\mathbb{E}\left[\boldsymbol x_j\right]\right)\right\|_2\\ 
			&\leq \frac{\left\|\boldsymbol W\right\|_2}{k+1}\left\|\sum_{j\in\left\{i\right\}\cup \mathcal{N}\left(i\right)}\boldsymbol x_j-\mathbb{E}\left[\boldsymbol x_j\right]\right\|_2\\ 
			&\leq \frac{\rho(\boldsymbol W)}{k+1}\left\|\sum_{j\in\left\{i\right\}\cup \mathcal{N}\left(i\right)}\boldsymbol x_j-\mathbb{E}\left[\boldsymbol x_j\right]\right\|_2,\\ 
		\end{aligned}
	\end{equation*}
	where $\|\boldsymbol W\|_2$ denotes the matrix 2-norm of $\boldsymbol W$ and $\rho(\boldsymbol W)$ denotes the largest singular value of $\boldsymbol W$. Then, for any $t > 0$, we have
	\begin{equation*}
		\begin{aligned} 
			&\mathbb{P}\left(\left\|\mathbf{h}_i-\mathbb{E}\left[\mathbf{h}_i\right]\right\|_2  \geq  t\right) leq \mathbb{P}\left(\frac{\rho(\boldsymbol W)}{k+1}\left\|\sum_{j\in \left\{i\right\}\cup \mathcal{N}\left(i\right)}\boldsymbol x_j-\mathbb{E}\left[\boldsymbol x_j\right]\right\|_2\geq  t\right)\\ 
			&=\mathbb{P}\left(\frac{1}{k+1}\left\|\sum_{j\in \{i\}\cup \mathcal{N}(i)}\boldsymbol x_j-\mathbb{E}[\boldsymbol x_j]\right\|_2\geq  \frac{t}{\rho(\boldsymbol W)}\right)\\
			&\leq 2d\exp \left(-\frac{((k+1)t_2/\rho(\boldsymbol{W})+\sqrt{d}C_{\boldsymbol{x}}+\sqrt{d}C_{\mu})^2}{2kd\sigma^2 + 4\sqrt{d}C_{\boldsymbol  x}((k+1)t_2/\rho(\boldsymbol{W})+\sqrt{d}C_{\boldsymbol {x}}+\sqrt{d}C_{\mu})/3} \right),\\
		\end{aligned}
	\end{equation*}
	which concludes the proof.
\end{proof}


\end{document}